\def\eqref#1{equation~\ref{#1}}
\def\1{\bm{1}}
\DeclareMathAlphabet{\mathsfit}{\encodingdefault}{\sfdefault}{m}{sl}
\SetMathAlphabet{\mathsfit}{bold}{\encodingdefault}{\sfdefault}{bx}{n}
\newtheorem{theorem}{Theorem}
\def \myoptalg {OGC}
\def \mygspalg {GGC}
\DeclareMathOperator{\diag}{diag}
\DeclareMathOperator{\abs}{abs}
\DeclareMathOperator{\tr}{tr}
\title{From Cluster Assumption to Graph Convolution: Graph-based Semi-Supervised Learning Revisited}
\author{Zheng Wang$^{1}$, Hongming Ding$^{2}$, Li Pan$^{1}$, Jianhua Li$^{1}$, Zhiguo Gong$^{3}$, Philip S. Yu$^{4}$\\
$^{1}$Shanghai Jiao Tong University \quad
$^{2}$NIO Technology \\
$^{3}$University of Macau \quad
$^{4}$University of Illinois at Chicago\\
\small \texttt{\{wzheng,panli,ljh888\}@sjtu.edu.cn; fstzgg@um.edu.mo; psyu@uic.edu} \\
}
\begin{document}

\maketitle
\begin{abstract}
Graph-based semi-supervised learning (GSSL) has long been a hot research topic.
Traditional methods are generally shallow learners, based on the cluster assumption.
Recently, graph convolutional networks (GCNs) have become the predominant techniques for their promising performance.
In this paper, we theoretically discuss the relationship between these two types of methods in a unified optimization framework.
One of the most intriguing findings is that, unlike traditional ones, typical GCNs may not jointly consider the graph structure and label information at each layer.
Motivated by this, we further propose three simple but powerful graph convolution methods.
The first is a supervised method \emph{\textbf{\myoptalg}} which guides the graph convolution process with labels.
The others are two unsupervised methods: \emph{\textbf{\mygspalg}} and its multi-scale version \emph{\textbf{GGCM}}, both aiming to preserve the graph structure information during the convolution process.
Finally, we conduct extensive experiments to show the effectiveness of our methods. Code is available at \url{https://github.com/zhengwang100/ogc_ggcm}.
\end{abstract}

\section{Introduction}
\vspace{-5pt}
Graph-based semi-supervised learning (GSSL)~\citep{zhu2005semi, chapelle2009semi} is one of the most successful paradigms for graph-structured data learning.
Traditional GSSL methods are generally shallow\footnote{Following~\citep{lecun2015deep}, we use the terminology ``shallow'' to mean those simple linear classifiers or any other classifiers which do not involve lots of trainable parameters.} learners, based on the cluster assumption~\citep{zhou2004learning} that linked nodes tend to have similar labels.
In their practical implementation, a supervised label learning model and an unsupervised graph structure learning model will be jointly considered.
This joint learning scheme makes it possible to take advantage of huge unlabeled nodes, to obtain significant performance with very few labels.

With the advancement of deep learning, there are now a variety of graph neural networks (GNNs)~\citep{wu2020comprehensive} in the GSSL area.
The most well-known methods are graph convolutional networks (GCNs) ~\citep{zhang2019graph} which generalize the idea of convolutional neural networks (CNNs)~\citep{lecun1995convolutional} to learn local, stationary, and compositional representations/embeddings of graphs.
Recently, GCNs and their subsequent variants have achieved great success in lots of applications, like social analysis~\citep{kipf_2017_iclr} and biological analysis~\citep{fout2017protein}.

Despite their enormous success, GCNs still badly suffer from over-smoothing~\citep{li2018deeper}, i.e., dramatic performance degradation with increasing depth.
On the other hand, traditional shallow GSSL methods do not have this issue, although these two kinds of methods both progress through the graph in a similar iterative manner.
Moreover, a simple baseline C\&S~\citep{huang2020combining}, which combines simple neural networks that ignore the graph structure with some traditional GSSL shallow methods, can match or even exceed the performance of state-of-the-art GCNs.
These observations call for a rigorous discussion of the relationship between the traditional shallow methods and the recently advanced GCNs, and how they can complement each other.

In this paper, we bridge these two directions from a joint optimization perspective.
In particular, we introduce a unified GSSL optimization framework which contains both a supervised classification loss (for label learning) and an unsupervised Laplacian smoothing loss (for graph structure learning).
We demonstrate that, compared to traditional shallow GSSL methods, typical GCNs further consider self-loop graphs, potentially employ non-linear operations, possess greater model capacity, but may not guarantee the simultaneous reduction of these two loss terms at each layer.

Motivated by this, we further propose three simple but powerful graph convolution methods.
Specifically, the first one is a supervised method named \emph{\textbf{O}ptimized Simple \textbf{G}raph \textbf{C}onvolution} (\emph{\textbf{\myoptalg}}).
When aggregating information from neighbors at each layer, OGC jointly minimizes both the classification and Laplacian smoothing loss terms by introducing an auxiliary \emph{\textbf{S}upervised \textbf{E}m\textbf{B}edding (\textbf{SEB})} operator.
Intuitively, as this aggregation proceeds, nodes incrementally gain more information from further reaches of the graph, while simultaneously benefiting from supervision.

The others are two unsupervised methods: \emph{\textbf{G}raph Structure Preserving \textbf{G}raph \textbf{C}onvolution} (\emph{\textbf{\mygspalg}}) and its multi-scale version \emph{\textbf{GGCM}}.
Their aims are both to preserve the graph structure during the feature aggregation procedure.
As graph convolution only encourages the similarity between linked nodes, we introduce another novel operator, \emph{\textbf{I}nverse \textbf{G}raph \textbf{C}onvolution (\textbf{IGC})}, to preserve the dissimilarity between unlinked nodes.
Through jointly conducting these two convolution operations at each iteration, they both can finally preserve the original graph structure in the embedding space.

Our contributions are two-fold.
First, through the use of a unified GSSL optimization framework, we discover that, in comparison to traditional GSSL methods, the learning process of typical GCNs may not effectively incorporate both the graph structure and label information at each layer. This finding may provide new insights for designing novel GCN-type methods.
Second, by introducing two novel graph embedding operators (SEB and IGC), we propose three simple but powerful graph convolution methods; especially, the introduced two novel operators can be used as ``plug-ins'' to fix the over-smoothing problem for various GCN-type models. 
\vspace{-5pt}
\section{Preliminaries}\label{sect_pre}
\vspace{-5pt}
Let $\mathcal{G}= (\mathcal{V}, \mathcal{E})$ be a graph with $n$ nodes $v_i \in \mathcal{V}$, and edges $(v_i, v_j) \in \mathcal{E}$.
Let $A\in \mathbb{R}^{n\times n}$ denote the adjacency matrix of this graph, and let $D\in \mathbb{R}^{n\times n}$ denote the diagonal degree matrix of $A$.
Let $X \in \mathbb{R}^{n\times d}$ denote the node feature/attribute matrix, i.e., each node $v_i$ has a corresponding $d$-dimensional attribute vector $X_i$.
Let $Y\in \mathbb{R}^{n\times c}$ denote the true labels of $n$ nodes coming from $c$ classes, i.e., $Y_i$ is the $c$-dimensional label vector of node $v_i$.

\vspace{-5pt}
\subsection{\mbox{Graph-based Semi-Supervised Learning (GSSL)}}
\vspace{-5pt}
Before the era of deep learning, most GSSL methods are shallow.
Assuming a subset $\mathcal{V}_{L} \subset \mathcal{V}$ of nodes are labeled, these methods generally aim to classify the remaining nodes based on the cluster assumption~\citep{zhou2004learning,zhu2005semi} (i.e., linked nodes tend to have similar class labels), by jointly considering a supervised classification loss  ($\mathcal{Q}_{sup}$) and an unsupervised Laplacian smoothing loss ($\mathcal{Q}_{smo}$):\footnote{Here, we omit the hyper-parameter between these two loss terms for simplicity.}
\begin{equation}
\begin{aligned}
  \min_{f,g} \hspace{0.5em}  \mathcal{Q} &= \mathcal{Q}_{sup} + \mathcal{Q}_{smo}
               = \sum_{v_{i}\in \mathcal{V}_L} \ell \left (g(f(X_i)), Y_i \right) + \tr( f(X)^\top L f(X) ),
\label{eq_gbsll_loss}
\end{aligned}
\end{equation}
where $\ell (\cdot, \cdot)$ is a loss function that measures the discrepancy between the predicted and target value,
$\tr(\cdot)$ is the trace operator, $L=D-A$ is the graph Laplacian matrix, $f(\cdot)$ is an embedding learning function that maps the nodes from the original feature space to a new embedding space, and $g(\cdot)$ is a label prediction function that maps the nodes from the embedding space to the label space.

\vspace{-5pt}
\subsection{Graph Convolutional Networks (GCNs)}
\vspace{-5pt}
GCNs~\citep{zhang2019graph} generalize the idea of convolutional neural networks (CNNs)~\citep{lecun1995convolutional} to graph-structured data.
One of the most prominent approaches is GCN~\citep{kipf_2017_iclr}.
At each layer, GCN performs two basic operations on each node.
The first one is graph convolution, i.e., aggregating information for each node from its neighbors:
\begin{equation}\label{eq_gc}
\begin{aligned}
\tilde{D}^{-\frac{1}{2}}\tilde{A}\tilde{D}^{-\frac{1}{2}} H^{(k)},
\end{aligned}
\end{equation}
where $\tilde{A} = A + I_n$ is the adjacency matrix with self-loops, $I_n$ is a $n$-dimensional identity matrix, and $\tilde{D}$ is the degree matrix of $\tilde{A}$.
$H^{(k)} \in \mathbb{R}^{n\times d^{(k)}}$ is the matrix of the learned $d^{(k)}$-dimensional node embeddings at the $k$-th layer, with $H^{(0)}=X$.

The second one is feature mapping, i.e., feeding the above convolution results to a fully-connected neural network layer.
Combining these two parts together, at the $k$-th layer, the forward-path of GCN can be formulated as: $H^{(k+1)} = \sigma (\tilde{D}^{-\frac{1}{2}}\tilde{A}\tilde{D}^{-\frac{1}{2}} H^{(k)} \Theta^{(k)})$,
where $\Theta^{(k)}\in \mathbb{R}^{d^{(k)} \times d^{(k+1)}}$ denotes the trainable weight matrix at the $k$-th layer,\footnote{Here, for convenience of description, we omit the bias term.} and $\sigma(\cdot)$  is a non-linear activation function like Sigmoid and ReLU.
For node classification, at the last layer, GCN will adopt a softmax classifier on the final outputs, and use the cross-entropy error over all labeled nodes as the loss function.

Another representative work of GCNs is SGC~\citep{wu2019simplifying} which simplifies GCN through removing nonlinearities and collapsing weight matrices between consecutive layers.
In particular, applying a $K$-layer SGC to the feature matrix $X$ corresponds to: $(\tilde{D}^{-\frac{1}{2}}\tilde{A}\tilde{D}^{-\frac{1}{2}} )^{K}X\Theta^{*}$,
where $\Theta^{*} \in \mathbb{R}^{d \times c}$ is the collapsed weight matrix.
Then, for node classification, it trains a multi-class logistic regression classifier with the pre-processed features.
Intuitively, SGC first processes the raw node features based on the graph structure in a ``no-learning'' way, and then trains a classifier with the pre-processed features.

\vspace{-5pt}
\section{Understanding GCNs From an Optimization Viewpoint}\label{sect_und_gcns}
\vspace{-5pt}
For easy understanding, we first analyze the simplest GCN-type method SGC~\citep{wu2019simplifying} and then GCN~\citep{kipf_2017_iclr}.

\vspace{-5pt}
\subsection{SGC Analysis}\label{subsect_sgc_analysis}
\vspace{-5pt}
Without loss of generality, we still consider a self-loop graph $\tilde{A}$ with its degree matrix $\tilde{D}$, and use $\tilde{L}=\tilde{D}-\tilde{A}$ to denote the Laplacian matrix.
Let $U=f(X)$ denote the learned node embedding matrix, i.e., $U$ is a matrix of node embedding vectors.\footnote{To avoid ambiguities, we use $U$ to denote the learned node embedding matrix in shallow methods, and use $U^{(k)}$ to denote the learned matrix at the $k$-th iteration.}
In addition, we further symmetrically normalize $\tilde{L}$ as $\tilde{D}^{-\frac{1}{2}}\tilde{L}\tilde{D}^{-\frac{1}{2}}$, and initialize $U$ with the node attribute matrix $X$ at the beginning, i.e., $U^{(0)} = X$.
As such, the objective function in Eq.~\ref{eq_gbsll_loss} becomes:
\begin{equation}
\begin{aligned}
  \min_{\substack{U, g, U^{(0)} = X}} \hspace{-0.2em} \bar{\mathcal{Q}} &= \bar{\mathcal{Q}}_{sup} + \bar{\mathcal{Q}}_{smo}
   = \sum_{v_{i}\in \mathcal{V}_L} \ell (g(U_{i}), Y_i) + \tr( U^\top \tilde{D}^{-\frac{1}{2}}\tilde{L}\tilde{D}^{-\frac{1}{2}}U ),
\label{eq_loss_graph_ssl_sgc}
\end{aligned}
\end{equation}
where $\bar{\mathcal{Q}}_{sup}$ is the supervised classification loss, and $\bar{\mathcal{Q}}_{smo}$ is the unsupervised smoothing loss.

From an optimization perspective, we can reformulate the graph convolution (i.e., Eq.~\ref{eq_gc}) used in SGC as the standard gradient descent step:\footnote{The detailed derivations can be found in Appendix~\ref{app_subsect_dev_sgc}.} $U^{(k+1)} = U^{(k)} - \frac{1}{2} \frac{\partial \bar{\mathcal{Q}}_{smo}}{\partial U^{(k)}}$.
In addition, the smoothing loss $\bar{\mathcal{Q}}_{smo}$ in Eq.~\ref{eq_loss_graph_ssl_sgc} can be reduced with an adjustable smaller learning rate by lazy random walk:\footnote{The detailed derivations can be found in Appendix~\ref{app_subsect_dev_lrw}.} $U^{(k+1)} = U^{(k)} - \frac{\beta}{2} \frac{\partial \bar{\mathcal{Q}}_{smo}}{\partial U^{(k)}} = [\beta \tilde{D}^{-\frac{1}{2}}\tilde{A}\tilde{D}^{-\frac{1}{2}} + (1-\beta)I_n]U^{(k)}$, where $\beta\in(0,1)$ is the moving probability that a node moves to its neighbors in every period~\citep{chung1997spectral}.
This indicates that compared to graph convolution, lazy random walk uses a smaller learning rate.
From the viewpoint of optimization, smaller learning rates slow down the convergence~\citep{boyd2004convex}.
In the following, for the sake of uniform comparison, we rename the ``lazy random walk'' as ``\emph{lazy graph convolution}''.

Suppose there are $K$ layers in SGC.
From an optimization perspective, the back propagation procedure in SGC aims to learn a prediction function (i.e., $g()$) to minimize a softmax-based cross-entropy classification loss (denoted as $\bar{\mathcal{Q}}_{sup}$ in Eq.~\ref{eq_loss_graph_ssl_sgc}) via gradient descent, while fixing $U=(\tilde{D}^{-\frac{1}{2}}\tilde{A}\tilde{D}^{-\frac{1}{2}} )^{K}X$.
Intuitively, SGC's two parts (i.e., graph convolution and back propagation) separately reduce the Laplacian smoothing loss $\bar{\mathcal{Q}}_{smo}$ and the classification loss  $\bar{\mathcal{Q}}_{sup}$ in Eq.~\ref{eq_loss_graph_ssl_sgc}.
However, to consistently reduce the overall loss $\bar{\mathcal{Q}}$ in Eq.~\ref{eq_loss_graph_ssl_sgc} via gradient descent, these two sequential parts should always guarantee to reduce the value of  $\bar{\mathcal{Q}}_{sup} + \bar{\mathcal{Q}}_{smo}$.
Therefore, as a whole, SGC may not guarantee to reduce the overall loss $\bar{\mathcal{Q}}$ in Eq.~\ref{eq_loss_graph_ssl_sgc}. 
\vspace{-5pt}
\subsection{GCN Analysis}\label{app_gcn_analysis}
\vspace{-5pt}
The analysis of GCN is analogous to SGC.
Actually, the analysis of a single-layer GCN is the same as that of SGC.
Here, we consider the $k$-th layer of a multi-layer GCN.

Like the analysis in Section~\ref{subsect_sgc_analysis}, we continue to consider a graph with self-loops, and we still adopt the symmetric normalized Laplacian matrix.
Let $H^{(k)}=f(X)$ denote the learned node embedding matrix at the $k$-th layer.
Then, at this layer, the objective function in Eq.~\ref{eq_gbsll_loss} becomes:
\begin{equation}
\begin{aligned}
   \min_{\substack{H^{(k)}, g^{(k)}\\ H^{(0)} = X} } \hspace{-0.2em} \hat{\mathcal{Q}}^{(k)} \hspace{-0.4em} &=  \hat{\mathcal{Q}}^{(k)}_{sup} + \hat{\mathcal{Q}}^{(k)}_{smo}
   = \hspace{-0.7em} \sum_{v_{i}\in \mathcal{V}_L} \hat{\mathcal{Q}}^{(k)}_{sup} {+} \tr( {H^{(k)}}^\top \tilde{D}^{-\frac{1}{2}}\tilde{L}\tilde{D}^{-\frac{1}{2}}H^{(k)} ),
\label{eq_loss_graph_ssl_gcn_kth}
\end{aligned}
\end{equation}
where $g^{(k)}$ stands for the mapping function at the $k$-th layer. $\hat{\mathcal{Q}}^{(k)}_{sup}$ and $\hat{\mathcal{Q}}^{(k)}_{smo}$ stand for the supervised classification loss and unsupervised Laplacian smoothing loss at the $k$-th layer, respectively.

Similarly, from an optimization perspective, at the $k$-th layer, we can rewrite the graph convolution in GCN as the standard gradient descent step: $H^{(k+1)} = H^{(k)} - \frac{1}{2} \frac{\partial \hat{\mathcal{Q}}_{smo}^{(k)}}{\partial H^{(k)}}$.
On the other hand, the analysis of back propagation in GCN is more complicated than that in SGC, due to the existence of many nonlinearities and network weights. We leave this for further work. However, intuitively, at the $k$-th layer, the graph convolution in GCN only aims to reduce the Laplacian smoothing loss $\hat{\mathcal{Q}}^{(k)}_{smo}$, without considering the classification loss $\hat{\mathcal{Q}}^{(k)}_{sup}$ in Eq.~\ref{eq_loss_graph_ssl_gcn_kth}.
Nevertheless, as a prerequisite, the first graph convolution operation (the other one is back propagation) must ensure the reduction of the value of $\hat{\mathcal{Q}}^{(k)}_{sup} +  \hat{\mathcal{Q}}^{(k)}_{smo}$, to consistently reduce the overall loss $\hat{\mathcal{Q}}^{(k)}$ at this layer via gradient descent.
Therefore, as a whole, at the $k$-th layer, GCN may not guarantee to reduce the overall loss $\hat{\mathcal{Q}}^{(k)}$ in Eq.~\ref{eq_loss_graph_ssl_gcn_kth}.

\vspace{-5pt}
\subsection{Summary of the Analysis}
\vspace{-5pt}
First of all, based on the analysis of SGC and GCN, we can get the following theorem.\footnote{Due to space limitation, in this paper, we give all the proofs in Appendix~\ref{app_sect_proofs}.}
\begin{theorem}\label{corol_proportion_sgc_gcn}
In SGC and GCN, if the node embedding results converge at the $k$-th layer, for each node pair $<v_i, v_j>$ in a connected component, we can obtain that:
$U^{(k)}_{i} = \frac{\sqrt{D_{ii}+1}}{\sqrt{D_{jj}+1}} U^{(k)}_{j}$ and $H^{(k)}_{i} = \frac{\sqrt{D_{ii}+1}}{\sqrt{D_{jj}+1}} H^{(k)}_{j}$.
\end{theorem}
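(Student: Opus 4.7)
My plan is to interpret ``convergence at the $k$-th layer'' via the gradient-descent viewpoint established in Sections~\ref{subsect_sgc_analysis}--\ref{app_gcn_analysis}: by the derivations there, the graph convolution step is $U^{(k+1)} = U^{(k)} - \tfrac{1}{2}\partial \bar{\mathcal{Q}}_{smo}/\partial U^{(k)}$, so the embedding having converged means the gradient of the smoothing loss vanishes, i.e., $\tilde{D}^{-\frac{1}{2}}\tilde{L}\tilde{D}^{-\frac{1}{2}} U^{(k)} = 0$, which is the same as the fixed-point condition $\tilde{D}^{-\frac{1}{2}}\tilde{A}\tilde{D}^{-\frac{1}{2}} U^{(k)} = U^{(k)}$ for the SGC iteration.

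Next, I would analyze the null space of $\tilde{D}^{-\frac{1}{2}}\tilde{L}\tilde{D}^{-\frac{1}{2}}$. Multiplying $\tilde{D}^{-\frac{1}{2}}\tilde{L}\tilde{D}^{-\frac{1}{2}} U^{(k)} = 0$ on the left by $\tilde{D}^{\frac{1}{2}}$ gives $\tilde{L}\,(\tilde{D}^{-\frac{1}{2}} U^{(k)}) = 0$, so every column of $\tilde{D}^{-\frac{1}{2}} U^{(k)}$ lies in the kernel of the (combinatorial) Laplacian $\tilde{L}$ of the self-loop graph. Since adding self-loops does not alter connectivity, this kernel is spanned on each connected component by the all-ones indicator of that component. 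Restricted to a connected component, each column of $\tilde{D}^{-\frac{1}{2}} U^{(k)}$ is therefore constant, so for any $v_i,v_j$ in the component,
\begin{equation*}
\frac{U^{(k)}_i}{\sqrt{\tilde{D}_{ii}}} \;=\; \frac{U^{(k)}_j}{\sqrt{\tilde{D}_{jj}}}.
\end{equation*}
Using $\tilde{D}_{ii}=D_{ii}+1$ gives the claimed proportionality $U^{(k)}_i = \frac{\sqrt{D_{ii}+1}}{\sqrt{D_{jj}+1}}\,U^{(k)}_j$.

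For the GCN case, the argument transfers almost verbatim. By the derivation in Section~\ref{app_gcn_analysis}, the graph convolution portion at the $k$-th layer is exactly a gradient-descent step on $\hat{\mathcal{Q}}^{(k)}_{smo}$ applied to $H^{(k)}$; hence if the embedding matrix has converged at that layer, the same vanishing-gradient condition $\tilde{D}^{-\frac{1}{2}}\tilde{L}\tilde{D}^{-\frac{1}{2}} H^{(k)} = 0$ must hold, after which the previous null-space analysis yields $H^{(k)}_i = \frac{\sqrt{D_{ii}+1}}{\sqrt{D_{jj}+1}}\,H^{(k)}_j$.

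The main obstacle I foresee is nailing down what ``convergence at the $k$-th layer'' precisely means for GCN, given the intervening nonlinearity $\sigma$ and weight matrix $\Theta^{(k)}$: unlike SGC, the raw layer update is not a pure gradient step on $\hat{\mathcal{Q}}^{(k)}_{smo}$, so I need to explicitly adopt the authors' decomposition of each GCN layer into a graph-convolution substep followed by feature mapping, and take convergence to mean that the graph-convolution substep is a fixed point. Once that is granted, the linear-algebra portion (kernel characterization of $\tilde{L}$ on a connected component, and translation back through $\tilde{D}^{-1/2}$) is routine and identical in both cases.
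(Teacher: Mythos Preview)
Your argument is correct and actually somewhat tighter than the paper's. The paper's proof (Appendix~\ref{app_subsect_proof_corol_proportion_sgc}) works from the optimization side: it observes that setting $U^{(k)}_i=\sqrt{\tilde{D}_{ii}}\,\vec{\tau}$ on each connected component drives $\bar{\mathcal{Q}}_{smo}$ to its lower bound $0$, and treats ``convergence'' as having reached that minimum; the GCN case is handled analogously with $\hat{\mathcal{Q}}^{(k)}_{smo}$. You instead use the fixed-point condition $\tilde{D}^{-1/2}\tilde{L}\tilde{D}^{-1/2}U^{(k)}=0$ and reduce to the classical kernel characterization of the combinatorial Laplacian $\tilde{L}$, which directly yields that $\tilde{D}^{-1/2}U^{(k)}$ is componentwise constant. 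The two routes are equivalent because $\bar{\mathcal{Q}}_{smo}$ is a convex quadratic, so gradient zero, loss zero, and fixed point all coincide; your null-space argument has the advantage of making the \emph{necessity} direction explicit (the paper only verifies sufficiency by plugging in the proportional form), while the paper's formulation stays closer to the loss-minimization narrative of Section~\ref{sect_und_gcns}. Your caveat about GCN---that ``convergence'' must be read as a fixed point of the graph-convolution substep, before $\Theta^{(k)}$ and $\sigma$ are applied---matches exactly how the paper treats it.
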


Theorem~\ref{corol_proportion_sgc_gcn} reveals the node embedding distribution when the over-smoothing happens in SGC and GCN.
Compared to a previous similar conclusion (i.e., Theorem 2) in~\citep{oono2019graph}, our theorem gives a more precise statement.
Some additional numerical verifications of Theorem~\ref{corol_proportion_sgc_gcn} can be found in Appendix~\ref{app_over_smoothing_exps}.

In addition, the main difference of the typical GCNs compared to the classical shallow GSSL methods are summarized as follows.
First, typical GCNs further consider a self-loop graph (i.e., $\tilde{A}$ in the graph convolution process defined in Eq.~\ref{eq_gc}).
Second, typical GCNs may involve some non-linear operations (i.e., $\sigma(\cdot)$ in the convolution process), and have a larger model capacity (if consisting of multiple trainable layers).
Last, and most notably, typical GCNs may not guarantee to jointly reduce the supervised classification loss and the unsupervised Laplacian smoothing loss at each layer.
In other words, typical GCNs may fail to jointly consider the graph structure and label information at each layer.
This may bring new insights for designing novel GCN-type methods.

\vspace{-5pt}
\section{Proposed Methods}\label{sect_method}
\vspace{-5pt}
In this section, we present three simple but powerful graph convolution methods in which the weight matrices and nonlinearities between consecutive layers are always removed for simplicity.

\vspace{-5pt}
\subsection{A Supervised Method: \myoptalg}\label{sect_optsgc}
\vspace{-5pt}
The analysis in Sect.~\ref{subsect_sgc_analysis} indicates that graph convolution may only reduce the smoothing loss $\bar{\mathcal{Q}}_{smo}$, but does not reduce the supervised loss $\bar{\mathcal{Q}}_{sup}$ in Eq.~\ref{eq_loss_graph_ssl_sgc}.
This motivates the proposed supervised method \emph{\textbf{O}ptimized \textbf{G}raph \textbf{C}onvolution} (\emph{\textbf{\myoptalg}}).

Continuing with the framework in Eq.~\ref{eq_loss_graph_ssl_sgc}, \myoptalg\ alleviates this issue by jointly considering the involved two loss terms in the convolution process.
Specifically, at the $k$-th iteration, it updates the learned node embeddings as follows:
\begin{equation}\label{eq_update_X_Qsup_Q_reg}
\begin{aligned}
U^{(k+1)} &= U^{(k)} - \eta_{smo}\frac{\partial\bar{\mathcal{Q}}_{smo}}{\partial U^{(k)}} - \eta_{sup} \frac{\partial \bar{\mathcal{Q}}_{sup}}{\partial U^{(k)}},
\end{aligned}
\end{equation}
where $\eta_{smo}$ and $\eta_{sup}$ are learning rates.
For simplicity, we adopt a linear model $W \in \mathbb{R}^{d \times c}$ as the label prediction function in $\bar{\mathcal{Q}}_{sup}$.
Since the learning rate $\eta_{smo}$ can be adjusted by the moving probability (i.e., $\beta$) in the lazy graph convolution, for easy understanding, we further set $\eta_{smo}=\frac{1}{2}$ and rewrite Eq.~\ref{eq_update_X_Qsup_Q_reg} in a lazy graph convolution form:
\begin{equation}\label{eq_update_X_Qsup_Q_reg_lazy}
\begin{aligned}
U^{(k+1)}
  &= [\beta\tilde{D}^{-\frac{1}{2}}\tilde{A}\tilde{D}^{-\frac{1}{2}}+ (1-\beta)I_{n} ]U^{(k)} - \eta_{sup} S(- Y +  Z^{(k)})W^\top ,
\end{aligned}
\end{equation}
where $S$ is a diagonal matrix with $S_{ii}=1$ if node $v_i$ is labeled, and $S_{ii}=0$ otherwise.
$Z^{(k)}$ is the predicted soft labels at the $k$-th iteration, e.g., $Z^{(k)}=U^{(k)}W$ if the squared loss is used, and $Z^{(k)}=\mathrm{softmax}(U^{(k)}W)$ if the cross-entropy loss is used.
The detailed derivations can be found in Appendix~\ref{app_subsect_dev_cel}.
In the following, for easy reference, we call the newly introduced embedding learning part (i.e., the second term in
the right hand of Eq.~\ref{eq_update_X_Qsup_Q_reg_lazy}) as \emph{\textbf{S}upervised \textbf{E}m\textbf{B}edding (\textbf{SEB})}.

\textbf{Implementation Details.}
In \myoptalg, we first initialize $U^{(0)}=X$, to satisfy the constraint in Eq.~\ref{eq_loss_graph_ssl_sgc}.
Then, we alternatively update $W$ and $U$ at each iteration, as the objective function formulated in Eq.~\ref{eq_loss_graph_ssl_sgc} leads to a multi-variable optimization problem.
Specifically, when $U$ is fixed, we can update $W$ manually or by some automatic-differentiation toolboxes implemented in popular deep learning libraries.
For example, if we adopt square loss for $\bar{\mathcal{Q}}_{sup}$, we can manually update $W$ as: $W = W - \eta_{W}\frac{\partial \bar{\mathcal{Q}}_{sup}}{\partial W}= W - \eta_{W}U^{(k)^\top}S(U^{(k)}W-Y) $, where $\eta_{W}$ is the learning rate;
if we adopt an auxiliary single-layer fully-connected neural network (whose involved weight matrix is $W$ and the bias term is omitted) as the label prediction function, we can automatically update $W$ by training this auxiliary neural network.
On the other hand, when $W$ is fixed, we can update $U$ via (lazy) supervised graph convolution (Eq.~\ref{eq_update_X_Qsup_Q_reg_lazy}).
Finally, at each iteration, we can naturally obtain the label prediction results from: $Z^{(k)}{=}U^{(k)}W$.
For clarity, we summarize the whole procedure of \myoptalg\ in Alg.~\ref{alg_ogc} in Appendix.

\textbf{Time Complexity.}
OGC alternatively updates $U$ and $W$ at each iteration.
The first part of updating $U$ (i.e., Eq.~\ref{eq_update_X_Qsup_Q_reg_lazy}) is the standard (lazy) graph convolution, with a complexity of $O(|\mathcal{E}|d)$, where $|\mathcal{E}|$ represents the number of edges in the graph.
The second part (i.e., SEB) of updating $U$ costs $O(ndc)$.
On the other hand, updating $W$ also costs $O(ndc)$.
Supposing there are $K$ iterations, the total time complexity of \myoptalg\ would be $O((|\mathcal{E}|+nc)dK)$, i.e., linear in the number of edges and nodes in the graph.

\vspace{-5pt}
\subsection{\mbox{Two Unsupervised Methods: \mygspalg\ and \mygspalg M}}\label{sub_sect_ggc}
\vspace{-5pt}
The analysis in Sect.~\ref{subsect_sgc_analysis} shows that graph convolution in SGC only ensures the similarity between linked nodes, indicating that the dissimilarity between unlinked nodes is not considered.
This motivates the proposed two unsupervised methods: \emph{\textbf{G}raph Structure Preserving \textbf{G}raph \textbf{C}onvolution} (\emph{\textbf{\mygspalg}}) and its multi-scale version \emph{\textbf{GGCM}}.

\begin{figure*}
    \centering
    \subfigure[Graph convolution]{\includegraphics[width=0.27\textwidth]{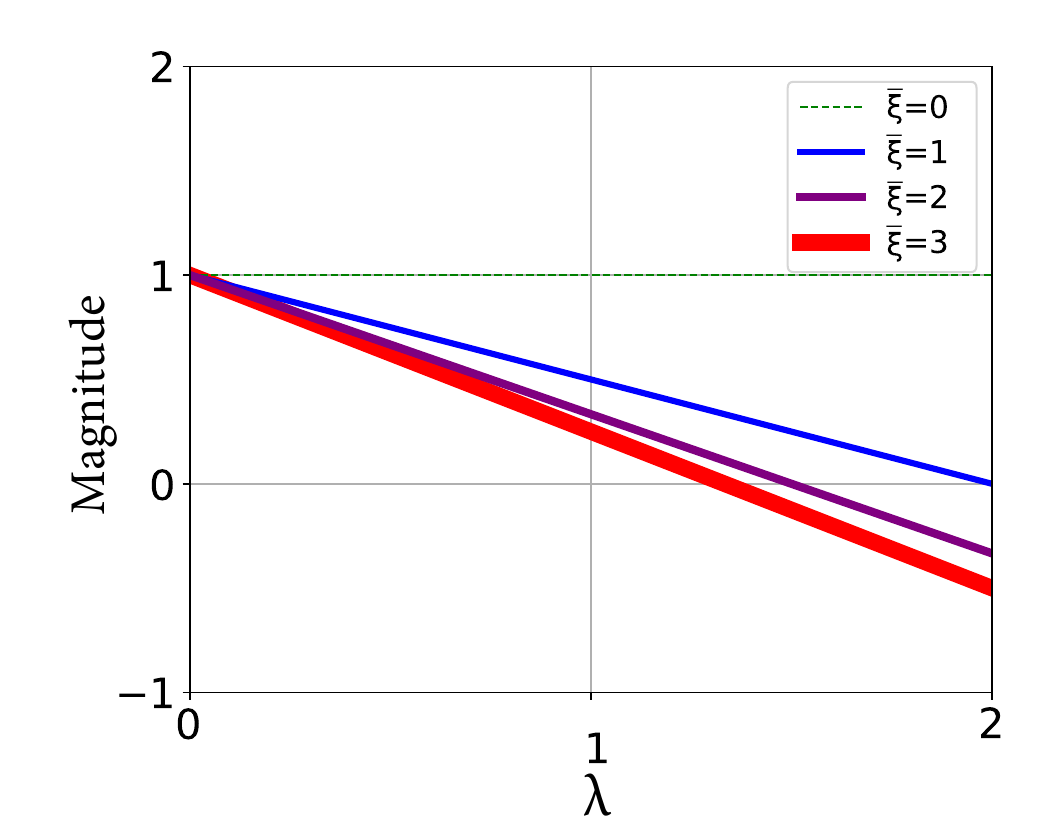}
    }
    \subfigure[IGC]{\includegraphics[width=0.27\textwidth]{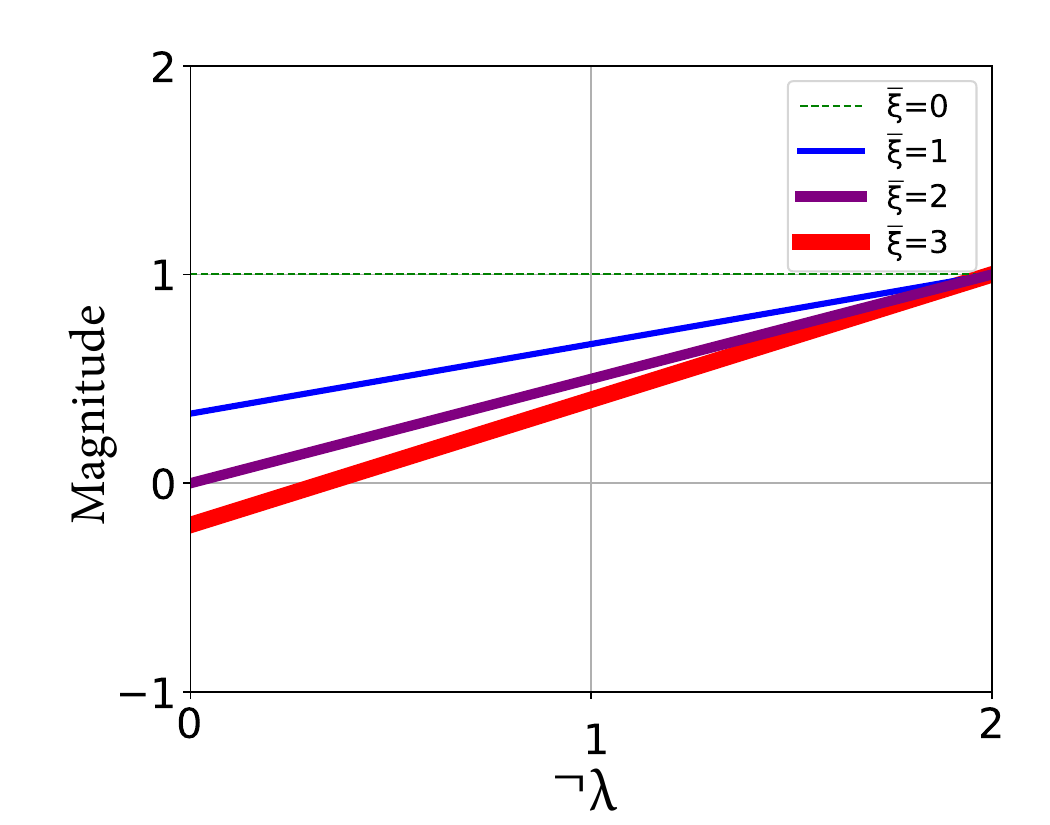}
    }
    \subfigure[Graph convolution + IGC]{\includegraphics[width=0.36\textwidth]{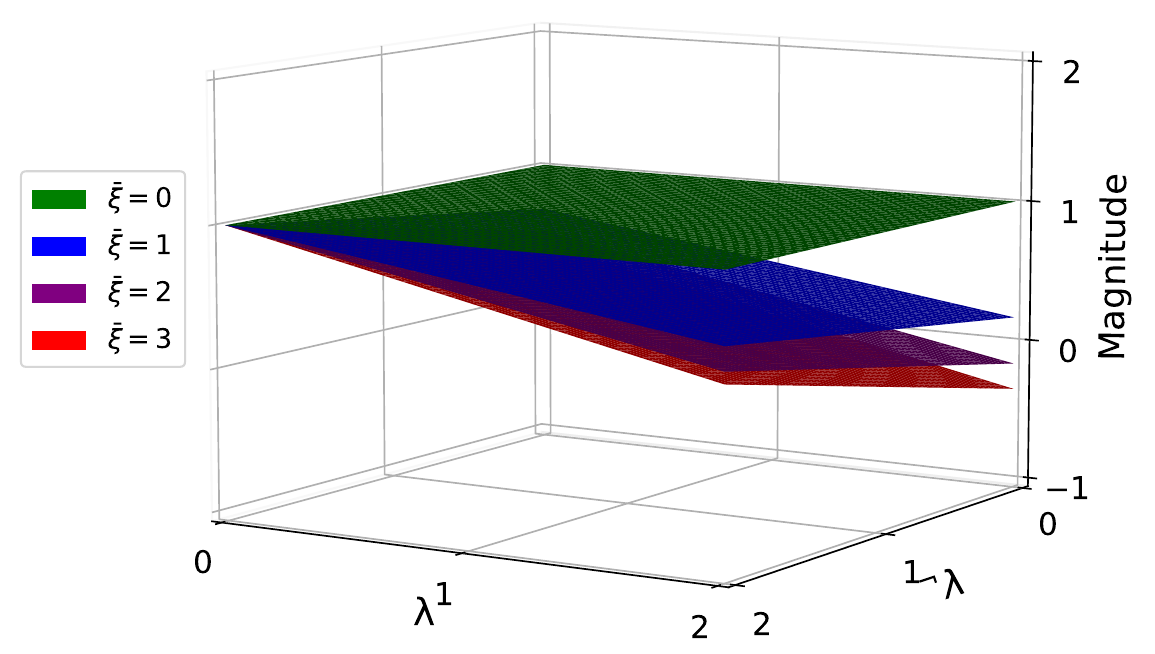}}
    \caption{Filter functions.
    $\lambda$ and ${}^\neg \lambda$ denote the eigenvalues of symmetric normalized Laplacian matrix of $A$ and ${}^{\neg}A$, respectively.
    }
    \label{fig_frequency response}

\end{figure*}

\vspace{-5pt}
\subsubsection{Guaranteeing the Dissimilarity Between Unlinked Nodes}
\vspace{-5pt}
We introduce a new information aggregation operator named \emph{\textbf{I}nverse \textbf{G}raph \textbf{C}onvolution (\textbf{IGC})}, to ensure all the unlinked nodes are far away from each other in the embedding space.
Because of the enormous number of unlinked node pairs, we adopt negative sampling~\citep{mikolov2013efficient}.
Let ${}^{\neg}A\in \mathbb{R}^{n\times n}$ denote a randomly generated sparse ``negative'' adjacency matrix in which ${}^{\neg}A_{ij}=1$ indicates that nodes $v_i$ and $v_j$ have no link, and ${}^{\neg}A_{ij}=0$ otherwise.
For this objective, we maximize the following Laplacian sharpening~\citep{taubin1995signal} loss:
\begin{equation}
\begin{aligned}\label{eq_inv_smoothness_loss_self_loop}
    \max_{\substack{U, U^{(0)} = X}} \hspace{0.1em} \mathcal{Q}_{sharp} &= \frac{1}{2} \sum_{i,j=1}^{n} {}^{\neg}A_{ij}( \frac{1}{\sqrt{{}^{\neg}D_{ii}}} U_i - \frac{1}{\sqrt{{}^{\neg}D_{jj}}} U_j)^{2} = \tr( U^\top{}^{\neg}D^{-\frac{1}{2}}{}^{\neg}L{}^{\neg}D^{-\frac{1}{2}}U ),
\end{aligned}
\end{equation}
where ${}^{\neg}D$ is the degree matrix of ${}^{\neg}A$, and ${}^{\neg}L$ is the Laplacian matrix of ${}^{\neg}A$.
Here, we use a constraint: $U^{(0)}=X$, to utilize the given node attributes.
This objective function can be maximized by the standard gradient ascent:
\begin{equation}\label{eq_inver_gc_update}
    \begin{aligned}
    U^{(k+1)} &= U^{(k)} + \eta_{sharp} \frac{\partial \mathcal{Q}_{sharp}}{\partial U^{(k)}} = (I_{n} + 2\eta_{sharp} {}^{\neg}D^{-\frac{1}{2}}{}^{\neg}L{}^{\neg}D^{-\frac{1}{2}})U^{(k)},
    \end{aligned}
\end{equation}
where $\eta_{sharp}$ is the learning rate.
However, as the eigenvalues of $(I_{n} + 2\eta_{sharp} {}^{\neg}D^{-\frac{1}{2}}{}^{\neg}L{}^{\neg}D^{-\frac{1}{2}})$ fall in $[1,4\eta_{sharp}+1]$, repeated application of Eq.~\ref{eq_inver_gc_update} could lead to numerical instabilities.
Therefore, we modify the above equation as follows.
For easy understanding, like the derivation in Eq.~\ref{eq_update_X_Qsup_Q_reg_lazy}, we first rewrite Eq.~\ref{eq_inver_gc_update} in a ``graph convolution'' form by setting the learning rate $\eta_{sharp}=\frac{1}{2}$:
\begin{equation}\label{eq_inver_gc_update_gcn_form}
    \begin{aligned}
    U^{(k+1)} &= (I_{n} + {}^{\neg}D^{-\frac{1}{2}}{}^{\neg}L{}^{\neg}D^{-\frac{1}{2}})U^{(k)} = (2I_{n}- {}^{\neg}D^{-\frac{1}{2}}{}^{\neg}A{}^{\neg}D^{-\frac{1}{2}})U^{(k)}.
    \end{aligned}
\end{equation}

Then, we adopt a re-normalization trick: $2I_{n}- {}^{\neg}D^{-\frac{1}{2}}{}^{\neg}A{}^{\neg}D^{-\frac{1}{2}} \to   {}^{\neg}\tilde{D}^{-\frac{1}{2}}{}^{\neg}\tilde{A}{}^{\neg}\tilde{D}^{-\frac{1}{2}}$,
where ${}^{\neg}\tilde{A} = 2I_{n}-{}^{\neg}A$ and ${}^{\neg}\tilde{D}= 2I_{n}+{}^{\neg}D$.
As such, at the $k$-th iteration, the forward-path of IGC can be finally formulated as:
\begin{equation}\label{eq_inver_gc_update_gcn_trick}
    \begin{aligned}
    U^{(k+1)} &=  ({}^{\neg}\tilde{D}^{-\frac{1}{2}}{}^{\neg}\tilde{A}{}^{\neg}\tilde{D}^{-\frac{1}{2}})U^{(k)}.
    \end{aligned}
\end{equation}

Now, the spectral radius of $({}^{\neg}\tilde{D}^{-\frac{1}{2}}{}^{\neg}\tilde{A}{}^{\neg}\tilde{D}^{-\frac{1}{2}})$ is 1 for any choice of ${}^{\neg}\tilde{A}$~\citep{li2009note}, indicating IGC is a numerically stable operator.
Actually, the newly constructed matrix ${}^{\neg}\tilde{D}$ can be written as ${}^{\neg}\tilde{D}_{ii}= \sum_{j} \abs({}^{\neg}\tilde{A}_{ij})$, where $\abs(\cdot)$ is the absolute value function.
In other words, ${}^{\neg}\tilde{D}$ acts as a normalization factor for those matrices containing both positive and negative entries.

Intuitively, compared to graph convolution, IGC optimizes a similar objective function but in an inverse direction.
Likewise, its time complexity is very similar to that of graph convolution, i.e., $O(|{}^{\neg}\mathcal{E}|d)$, where $|{}^{\neg}\mathcal{E}|$ represents the number of edges in the randomly generated negative graph.

We now analyze the expressive power of IGC from a spectral perspective, following~\citep{balcilar2021analyzing}.
Theoretically, in the frequency domain, the general graph convolution operation between the signal $\mathfrak{s}$ and filter function $\Phi(\boldsymbol{\lambda})$ can be defined as:
$Q\diag(\Phi(\boldsymbol{\lambda}))Q^T \mathfrak{s}$, where $\diag(\cdot)$ is the diagonal matrix operator, and $Q \in \mathbb{R}^{n\times d}$ and $\boldsymbol{\lambda} \in \mathbb{R}^{n}$ gather the eigenvectors and their corresponding eigenvalues of a Laplacian matrix, respectively.

\begin{theorem}\label{the_igc resp}
The filter function of IGC can be approximated as:
$\Phi(\boldsymbol{\lambda}) \approx \frac{2-\bar{\xi}+\bar{\xi}\boldsymbol{\lambda}}{\bar{\xi}+2}$,
where $\bar{\xi}$ is the average node degree in the negative graph used in IGC.
\end{theorem}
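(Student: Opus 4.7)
The plan is to start from the explicit form of the IGC operator given in Eq.~\ref{eq_inver_gc_update_gcn_trick}, namely ${}^{\neg}\tilde{D}^{-\frac{1}{2}}{}^{\neg}\tilde{A}{}^{\neg}\tilde{D}^{-\frac{1}{2}}$ with ${}^{\neg}\tilde{A}=2I_{n}-{}^{\neg}A$ and ${}^{\neg}\tilde{D}=2I_{n}+{}^{\neg}D$, and to rewrite it in the spectral basis of the symmetric normalized Laplacian ${}^{\neg}\mathcal{L}={}^{\neg}D^{-\frac{1}{2}}{}^{\neg}L{}^{\neg}D^{-\frac{1}{2}}$, whose eigenvalues are exactly the ${}^\neg\lambda$ that appear in the statement. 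The key modeling step is the \emph{average-degree approximation} ${}^{\neg}D\approx\bar{\xi}I_{n}$, which is justified because the negative adjacency matrix ${}^{\neg}A$ is generated by negative sampling and so its row sums concentrate tightly around $\bar{\xi}$. Under this approximation, ${}^{\neg}\tilde{D}\approx(\bar{\xi}+2)I_{n}$ and hence ${}^{\neg}\tilde{D}^{-\frac{1}{2}}\approx(\bar{\xi}+2)^{-1/2}I_{n}$.

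Next I would eliminate ${}^{\neg}A$ in favour of ${}^{\neg}\mathcal{L}$. Since ${}^{\neg}D^{-\frac{1}{2}}{}^{\neg}A{}^{\neg}D^{-\frac{1}{2}}=I_{n}-{}^{\neg}\mathcal{L}$, the same average-degree approximation gives ${}^{\neg}A\approx\bar{\xi}(I_{n}-{}^{\neg}\mathcal{L})$. Substituting into ${}^{\neg}\tilde{A}=2I_{n}-{}^{\neg}A$ yields
\begin{equation*}
{}^{\neg}\tilde{D}^{-\frac{1}{2}}{}^{\neg}\tilde{A}{}^{\neg}\tilde{D}^{-\frac{1}{2}}
\;\approx\;\frac{1}{\bar{\xi}+2}\bigl(2I_{n}-\bar{\xi}(I_{n}-{}^{\neg}\mathcal{L})\bigr)
\;=\;\frac{(2-\bar{\xi})I_{n}+\bar{\xi}\,{}^{\neg}\mathcal{L}}{\bar{\xi}+2}.
\end{equation*}
Reading off the eigenvalues on the basis in which ${}^{\neg}\mathcal{L}$ is diagonal then gives $\Phi(\boldsymbol{\lambda})\approx(2-\bar{\xi}+\bar{\xi}\boldsymbol{\lambda})/(\bar{\xi}+2)$, which is the claimed filter.

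The main obstacle is the rigour of the average-degree approximation: replacing the diagonal matrix ${}^{\neg}D$ by a scalar multiple of the identity does not preserve the eigenvectors exactly, so I would be careful to frame the statement as an approximation (the theorem is already phrased with ``$\approx$'') and to point out that the approximation becomes tight when the negative graph is close to regular — precisely the regime produced by fixed-count negative sampling. If a sharper bound were desired, the natural route would be to control $\|{}^{\neg}D-\bar{\xi}I_{n}\|$ via standard concentration for sampled degree sequences and to push that error through the resolvent expansion of ${}^{\neg}\tilde{D}^{-\frac{1}{2}}$, but for the statement as given, the two substitutions above followed by the algebraic simplification suffice.
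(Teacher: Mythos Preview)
Your proposal is correct and follows essentially the same route as the paper: both arguments reduce to the regular-graph case via the average-degree approximation ${}^{\neg}D\approx\bar{\xi}I_n$, express ${}^{\neg}A$ through the symmetric normalized Laplacian, and read off the filter after the identical algebraic simplification. The paper merely orders the steps slightly differently (first computing exactly for a $\xi$-regular graph, then invoking the approximation $\xi\approx\bar{\xi}$), whereas you invoke the approximation up front; the content is the same.
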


\begin{theorem}\label{the_igc_bound}
Supposing ${}^\neg \lambda_n$ is the largest eigenvalue of matrix ${}^{\neg}\tilde{D}^{-\frac{1}{2}}( {}^{\neg}\tilde{D} - {}^{\neg}\tilde{A}){}^{\neg}\tilde{D}^{-\frac{1}{2}}$, we have that: ${}^\neg \lambda_n \le 2-\frac{4}{2+\max_i {}^{\neg}D_{ii}} < 2$.
\end{theorem}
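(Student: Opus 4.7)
The plan is to recast the spectral bound as a generalized Rayleigh quotient and then sandwich the quotient using two elementary facts: the positive semidefiniteness of the ordinary graph Laplacian of the negative graph, and a diagonal upper bound on ${}^{\neg}\tilde{D}$.

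First I would simplify the matrix by plugging in the definitions ${}^{\neg}\tilde{A} = 2I_n - {}^{\neg}A$ and ${}^{\neg}\tilde{D} = 2I_n + {}^{\neg}D$ given just after Eq.~\ref{eq_inver_gc_update_gcn_form}, which collapse to ${}^{\neg}\tilde{D} - {}^{\neg}\tilde{A} = {}^{\neg}D + {}^{\neg}A$. Making the change of variable $y = {}^{\neg}\tilde{D}^{-1/2}x$ rewrites the quantity of interest as the generalized Rayleigh quotient
$${}^\neg \lambda_n \;=\; \max_{y \neq 0} \frac{y^\top({}^{\neg}D + {}^{\neg}A)y}{y^\top(2I_n + {}^{\neg}D)y}.$$

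Next I would exploit the algebraic identity $2(2I_n + {}^{\neg}D) - ({}^{\neg}D + {}^{\neg}A) = 4I_n + {}^{\neg}L$, where ${}^{\neg}L = {}^{\neg}D - {}^{\neg}A \succeq 0$ is the standard Laplacian of the negative graph. Positive semidefiniteness of ${}^{\neg}L$ yields the pointwise inequality $y^\top({}^{\neg}D + {}^{\neg}A)y \le 2\, y^\top(2I_n + {}^{\neg}D)y - 4\|y\|^2$. Combining with the diagonal bound $y^\top(2I_n + {}^{\neg}D)y \le (2 + \max_i {}^{\neg}D_{ii})\|y\|^2$ (rearranged to $4\|y\|^2 \ge \tfrac{4}{2 + \max_i {}^{\neg}D_{ii}}\, y^\top(2I_n + {}^{\neg}D)y$) gives, for every $y \neq 0$,
$$y^\top({}^{\neg}D + {}^{\neg}A)y \;\le\; \Bigl(2 - \tfrac{4}{2 + \max_i {}^{\neg}D_{ii}}\Bigr)\, y^\top(2I_n + {}^{\neg}D)y.$$
Dividing through by the (strictly positive) denominator and taking the supremum over $y$ yields the claimed upper bound on ${}^\neg \lambda_n$, and the strict inequality $< 2$ is immediate since $\tfrac{4}{2 + \max_i {}^{\neg}D_{ii}} > 0$.

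I do not anticipate any serious obstacle: the argument is essentially two lines of matrix algebra (the rewriting of ${}^{\neg}\tilde{D} - {}^{\neg}\tilde{A}$ and the identity $2{}^{\neg}\tilde{D} - ({}^{\neg}D + {}^{\neg}A) = 4I_n + {}^{\neg}L$) followed by one Rayleigh-quotient inequality and one diagonal bound. The mildest point requiring care is verifying that ${}^{\neg}\tilde{D}^{-1/2}$ is well defined, which it is because ${}^{\neg}\tilde{D} = 2I_n + {}^{\neg}D$ has all diagonal entries at least $2$, so the change of variables and the strict positivity of the denominator both hold unconditionally.
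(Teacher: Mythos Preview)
Your proof is correct and arguably cleaner than the paper's, but the route is genuinely different. The paper does \emph{not} simplify ${}^{\neg}\tilde{D}-{}^{\neg}\tilde{A}$ to ${}^{\neg}D+{}^{\neg}A$ up front; instead it expands the target matrix as $I_n - 2\,{}^{\neg}\tilde{D}^{-1} + {}^{\neg}\tilde{D}^{-1/2}{}^{\neg}A\,{}^{\neg}\tilde{D}^{-1/2}$ and bounds the three pieces separately via Rayleigh quotients. The key intermediate step there is to show that the largest eigenvalue $\omega_n$ of ${}^{\neg}\tilde{D}^{-1/2}{}^{\neg}A\,{}^{\neg}\tilde{D}^{-1/2}$ satisfies $\omega_n\le \tfrac{\max_i {}^{\neg}D_{ii}}{2+\max_i {}^{\neg}D_{ii}}$, which the paper obtains by the change of variable $y={}^{\neg}D^{1/2}{}^{\neg}\tilde{D}^{-1/2}x$ together with the fact that ${}^{\neg}D^{-1/2}{}^{\neg}A\,{}^{\neg}D^{-1/2}$ has spectral radius~$1$. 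Your argument replaces all of this with the single identity $2\,{}^{\neg}\tilde{D}-({}^{\neg}D+{}^{\neg}A)=4I_n+{}^{\neg}L$ and the positive semidefiniteness of ${}^{\neg}L$, which is more direct and also sidesteps two mild technicalities in the paper's version: the need for ${}^{\neg}D$ to be invertible (so that ${}^{\neg}D^{-1/2}$ exists in the intermediate substitution) and the slightly informal ``$\max(\mathfrak{A}\mathfrak{B})=\max(\mathfrak{A})\max(\mathfrak{B})$'' step. Both proofs ultimately feed the same diagonal bound on ${}^{\neg}\tilde{D}$ into a Rayleigh-quotient inequality, so they land on the identical constant $2-\tfrac{4}{2+\max_i{}^{\neg}D_{ii}}$.
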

Based on Theorem~\ref{the_igc resp} and Theorem~\ref{the_igc_bound}, we can depict the filtering operation of IGC.
As shown in Fig.~\ref{fig_frequency response}(b), IGC is a high-pass filter, i.e., the response to high frequency signals will be greater than that to low frequency signals.
Figure~\ref{fig_frequency response}(b) also shows that the absolute magnitude values of IGC are always less than $1$, confirming its numerically stable.
In addition, as illustrated in Fig.~\ref{fig_frequency response}(c), combining IGC and graph convolution will gather both the similarity information (among linked nodes) in the original graph and the dissimilarity information (among unlinked nodes) in the randomly generated negative graph.

\vspace{-5pt}
\subsubsection{GGC and GGCM}
\vspace{-5pt}
Intuitively, by simultaneously conducting graph convolution and IGC at each iteration, we would finally preserve the original graph structure in the embedding space.
Following this, we give two unsupervised methods.

\textbf{\mygspalg: Preserving Graph Structure Information.}
The idea is to preserve the graph structure during the convolution procedure.
Therefore, at each iteration, \mygspalg\ just simultaneously conducts (lazy) graph convolution and IGC.
Specifically, before the iteration starts in \mygspalg, we first initialize $U^{(0)} = X$, to satisfy the constraint in the objective functions defined in Eq.~\ref{eq_loss_graph_ssl_sgc} and Eq.~\ref{eq_inv_smoothness_loss_self_loop}.
Then, at each (like the $k$-th) iteration, we first generate a node embedding matrix (denoted as $U^{(k)}_{smo}$) via (lazy) graph convolution.
Next, we generate another node embedding matrix (denoted as $U^{(k)}_{sharp}$) via (lazy) IGC.
After that, we adopt the average of these two results (i.e., $U^{(k)}= \nicefrac{(U^{(k)}_{smo}+U^{(k)}_{sharp})}{2}$) as the node embedding result at this iteration.
Finally, we feed the obtained $U^{(k)}$ to the next iteration.

\textbf{GGCM: Preserving Multi-scale Graph Structure Information.}
The aim is to learn multi-scale representations for graph structure preserving.
Inspired by DenseNet~\citep{huang2017densely}, we simply sum the embedding results at all iterations.
Specifically, to preserve the multi-scale similarity between linked nodes, we sum the outputs of (lazy) graph convolution at each iteration.
On the other hand, to preserve the multi-scale dissimilarity between unlinked nodes, we also sum the outputs of (lazy) IGC at each iteration.
In addition, to be consistent with the multi-scale information of the first part, we use the same inputs for IGC as in the first part.
As a whole, at the $k$-th iteration, GGCM gathers the multi-scale representations (denoted as ${U_{M}}^{(k)}$) of graph structure information, as follows:
${U_{M}}^{(k)}  = \alpha X + (1-\alpha) \frac{1}{k}\sum_{t=1}^{k} [\nicefrac{(U^{(t)}_{smo} + U^{(t)}_{sharp})}{2}]$,
where $\alpha$ is a balancing hyper-parameter.

We can see that these two methods are ``no-learning'' methods, as they do not need to learn any parameters.
For clarity, we summarize the whole procedure of GGC and GGCM method in Alg.~\ref{alg_ggc} and Alg.~\ref{alg_ggcm} in Appendix, respectively.
Some rigorous analysis can be found in Appendix~\ref{app_subsect_proof_theorem_ggc_obj}.

\textbf{Time Complexity.}
At each iteration, \mygspalg\ and GGCM both generate two parts of node embedding results.
The first part is obtained via the standard graph convolution whose time complexity is $O(|\mathcal{E}|d)$.
The other part is obtained via IGC whose time complexity is $O(|{}^{\neg}\mathcal{E}|d)$.
Supposing there are $K$ iterations, the overall time complexity of these two methods are both $O((|\mathcal{E}| + |{}^{\neg}\mathcal{E}|)dK)$, i.e., linear in the number of graph and negative graph edges.

\vspace{-5pt}
\section{Experiments}\label{sect_exp}
\vspace{-5pt}
\subsection{Experimental Setup}\label{sec_exp_setup}
\vspace{-5pt}
\textbf{Datasets.}
For semi-supervised node classification, we first adopt three well-known citation network datasets: Cora, Citeseer and Pubmed~\citep{sen2008collective}, with the standard train/val/test split setting~\citep{kipf_2017_iclr, yang2016revisiting}.
Experiments on these citation networks are transductive, i.e., all nodes are accessible during training.
For inductive learning, we use a much larger social graph Reddit~\citep{hamilton2017inductive}.
In this graph, testing nodes are unseen during the model learning process.

\begin{figure*}[!ht]
  \begin{minipage}{.65\textwidth}
  \centering
    \scriptsize
    \renewcommand{\arraystretch}{0.9}
    \begin{tabular}{lll|ccc}
    \toprule
    \multicolumn{2}{c}{\textbf{Method}} &\textbf{\#Layer} & \textbf{Cora} & \textbf{Citeseer} & \textbf{Pubmed} \\
    \midrule
    \multirow{16}{*}{\begin{turn}{90}\textbf{supervised}\end{turn}}
    & LP & $-$ &  71.5  & 48.9 & 65.8 \\
    & ManiReg & $-$ & 59.5 & 60.1  & 70.7 \\
    & GCN &  $2$ & 81.5   & 70.3  & 79.0\\
    & APPNP & $2$ & 83.3  & 71.8 & 80.1 \\
    & Eigen-GCN &  $2$ &78.9$\pm$ 0.7  & 66.5$\pm$0.3 & 78.6$\pm$0.1 \\
    & GNN-LF/HF & $2$  & 84.0$\pm$0.2  & 72.3$\pm$0.3  & 80.5$\pm$0.3 \\
    & C\&S & $3$  & 84.6$\pm$0.5 & 75.0$\pm$0.3 & 81.2$\pm$0.4 \\
    & NDLS      &  $2$ &84.6$\pm$0.5 &73.7$\pm$0.6 &81.4$\pm$0.4 \\
    & ChebNetII & $2$ &  83.7$\pm$0.3 &72.8$\pm$0.2 &80.5$\pm$0.2  \\
    & OAGS & $2$ &  83.9$\pm$0.5 &73.7$\pm$0.7 &81.9$\pm$0.9  \\
    & JKNet & $\{4, 16, 32\}$ &  82.7 $\pm$ 0.4 &  73.0 $\pm$ 0.5 &  77.9 $\pm$ 0.4  \\
    & Incep & $\{64, 4, 4\}$  & 82.8  & 72.3 & 79.5 \\
    & GCNII & $\{64, 32, 16\}$  & 85.5$\pm$0.5  & 73.4 $\pm$ 0.6 & 80.2 $\pm$ 0.4 \\
    & GRAND & $\{8, 2, 5\}$  & 85.4$\pm$0.4 & 75.4$\pm$0.4 & 82.7$\pm$0.6 \\
    & ACMP & $\{8, 4, 32\}$ & 84.9$\pm$0.6 & 75.5 $\pm$ 1.0 & 79.4$\pm$0.4 \\
    & \textbf{\myoptalg} (ours) & $>2$\tnote{*} & \textbf{86.9 $\pm$ 0.0} & \textbf{77.5 $\pm$ 0.0} & \textbf{83.4 $\pm$ 0.0} \\ %true
\midrule
\multirow{13}{*}{\begin{turn}{90}\textbf{unsupervised}\end{turn}}
    & DeepWalk & $-$ & 67.2 & 43.2 & 65.3  \\
    & node2vec & $-$  &71.5 &45.8 &71.3\\
    & VERSE & $-$  & 72.5 $\pm$ 0.3 & 55.5  $\pm$ 0.4  & $-$ \\
    & GAE &  $4$  & 71.5 $\pm$ 0.4 & 65.8  $\pm$ 0.4 & 72.1 $\pm$ 0.5 \\
    & DGI &  $2$  & 82.3 $\pm$ 0.6 & 71.8 $\pm$ 0.7  & 76.8 $\pm$ 0.6 \\
    & DGI Random-Init  & $2$  & 69.3 $\pm$ 1.4 & 61.9 $\pm$ 1.6  & 69.6 $\pm$ 1.9 \\
    & BGRL & $2$  & 81.1$\pm$0.2 &71.6$\pm$0.4 &80.0$\pm$0.4 \\
    & N2N  & $2$  & 83.1$\pm$0.4 &73.1$\pm$0.6 &80.1$\pm$0.7 \\
    & SGC &$2$  & 81.0 $\pm$0.0	&71.9 $\pm$0.1	&78.9 $\pm$0.0 \\
    & S$^2$GC  &$16$    & 83.0 $\pm$0.2	&73.6 $\pm$0.1	&80.2 $\pm$0.2 \\ %github
    & G$^2$CN &$10$     &82.7 &73.8 &80.4 \\
    & \textbf{\mygspalg} (ours)   & $>2$\tnote{*}  & 81.8 $\pm$ 0.3 & 73.8 $\pm$ 0.2 & 79.6 $\pm$ 0.1 \\
    & \textbf{\mygspalg M} (ours)  & $>2$\tnote{*} & \textbf{83.6 $\pm$ 0.2} & \textbf{74.2 $\pm$ 0.1} & \textbf{80.8 $\pm$ 0.1} \\ %nips23 tougao
\bottomrule
    \end{tabular}
    \captionof{table}{\footnotesize{Classification accuracy ($\%$). The ``\#Layers'' column highlights the best layer number.}}
    \label{tab_node_classification}
  \end{minipage}
  \quad
  \begin{minipage}{.32\textwidth}
    \centering
    \includegraphics[width=0.78\linewidth]{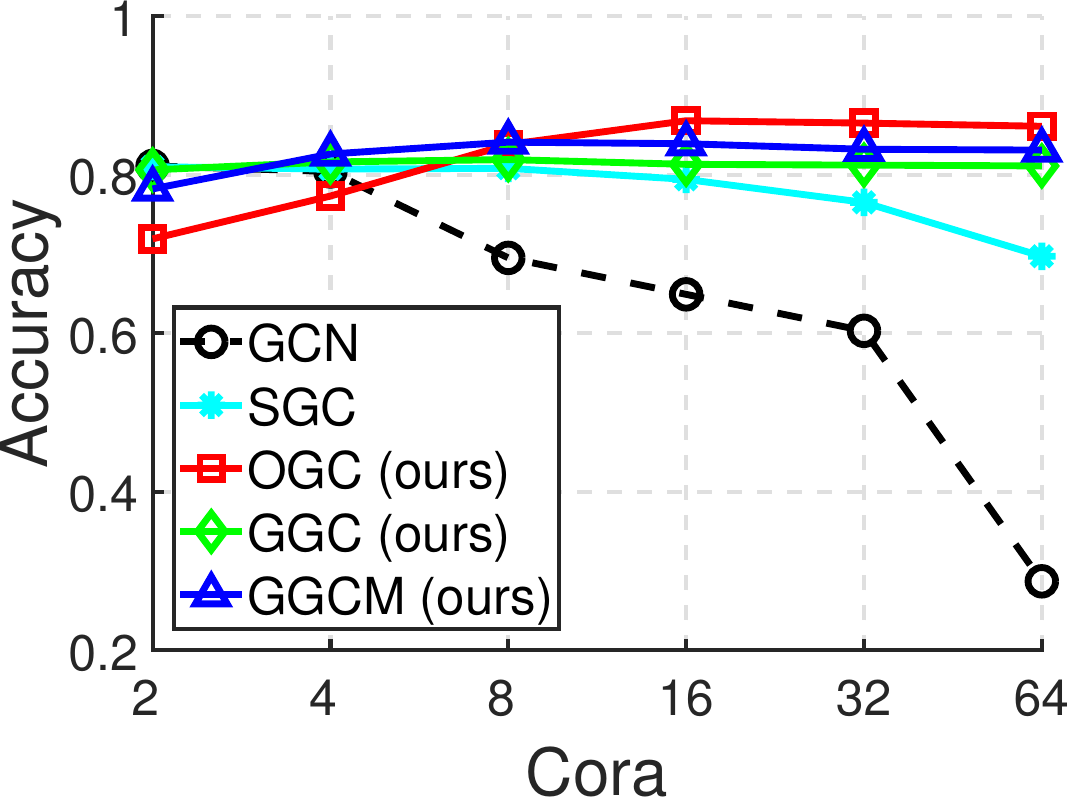}
    \includegraphics[width=0.78\linewidth]{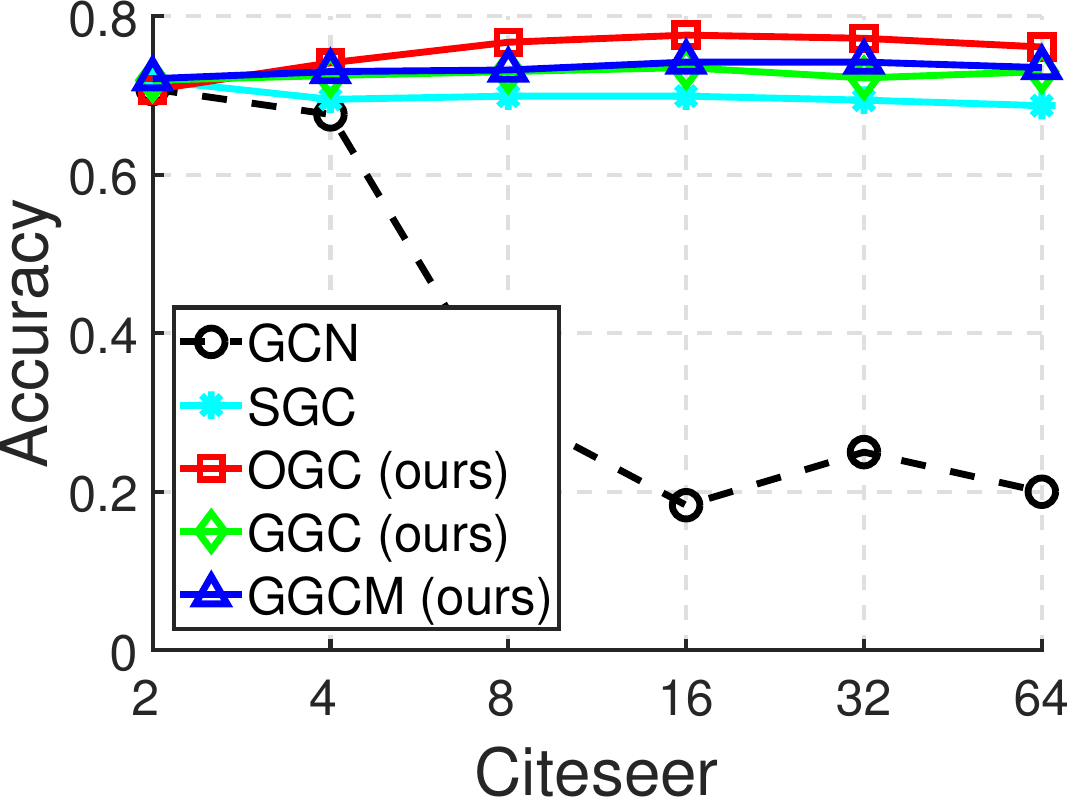}
    \includegraphics[width=0.78\linewidth]{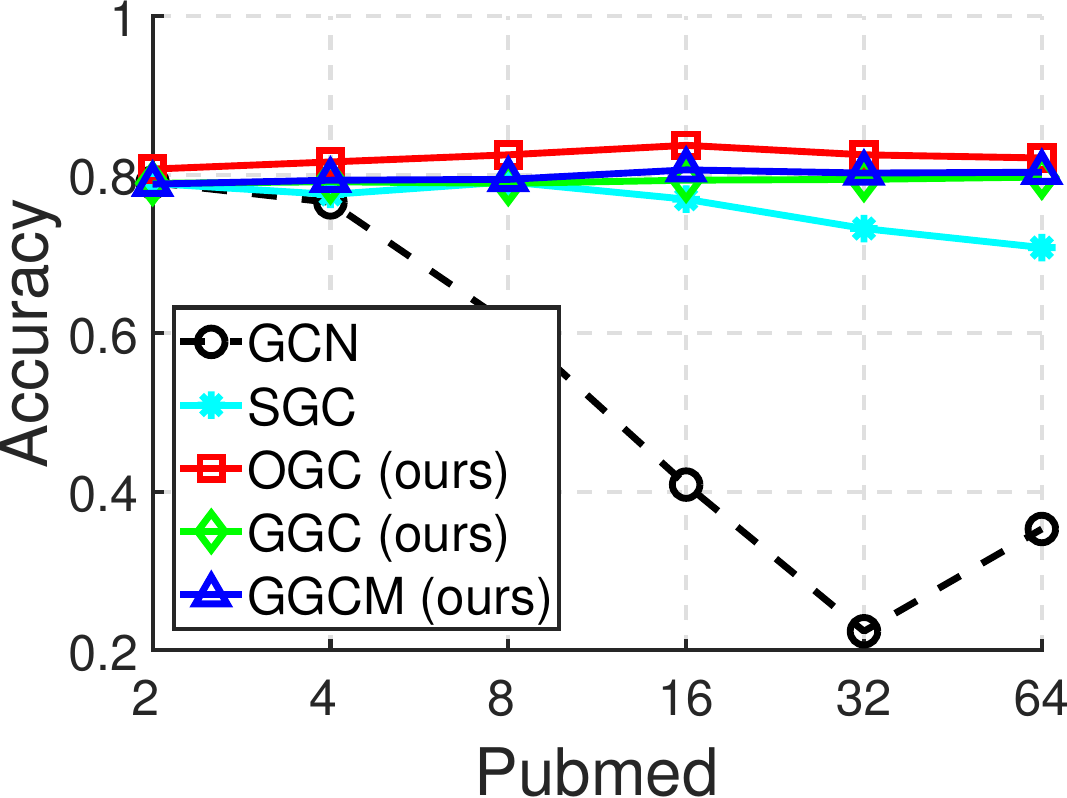}
	\captionof{figure}{\footnotesize{Classification performance w.r.t. layer number ($x$-axis: layer number).}}
	\label{fig_acc_diff_k}
  \end{minipage}
\end{figure*}

\textbf{Baselines.}
We compare our methods with some unsupervised methods including the skip-gram based methods (DeepWalk~\citep{perozzi_2014_kdd} and node2vec~\citep{grover2016node2vec}), auto-encoder based methods (GAE~\citep{kipf_2016_arxiv} and  VERSE~\citep{tsitsulin_2018_www}), graph contrastive methods (DGI~\citep{velickovic_2019_iclr}, BGRL~\citep{thakoor2022large}, and N2N~\citep{dong2022node} ) and some GCN methods (SGC~\citep{wu2019simplifying}, S$^2$GC~\citep{zhusimple2021} and G$^2$CN~\citep{li2022g}).
In addition, we further compare some supervised methods which can be mainly categorized into three types.
The first type are shallow cluster-assumption based GSSL methods: label propagation (LP) ~\citep{zhu_2003_icml} and manifold regularization (ManiReg)~\citep{belkin2006manifold}.
The second type are some shallow GCN methods, including GCN~\citep{kipf_2017_iclr}, APPNP~\citep{klicpera2018predict}, Eigen-GCN~\citep{zhang2021eigen}, GNN-LF/HF~\citep{zhu2021interpreting}, C\&S~\citep{huang2020combining}, NDLS~\citep{zhang2021node}, ChebNetII~\citep{he2022convolutional} and OAGS~\citep{song2022towards}.
Finally, we also compare with some deep GCN methods: JKNet~\citep{xu2018representation}, Incep~\citep{rong2019dropedge}, GCNII~\citep{chen2020simple}, GRAND~\citep{feng2020graph} and ACMP~\citep{wang2023acmp}.

\textbf{Implementation Details.}
In our methods, we always set the maximum iteration number as 64.
As OGC is actually a shallow method, it does not need the validation set to avoid overfitting.
Therefore, like LP~\citep{zhu_2003_icml} and C\&S~\citep{huang2020combining}, we use both train and validation sets as the supervised knowledge.
In addition, to further mitigate the risk of overfitting within the learned node embeddings, when updating $U$ in Eq.~\ref{eq_update_X_Qsup_Q_reg_lazy}, we only use the train set to build the label indicator diagonal matrix $S$.
We call this trick ``\emph{\textbf{L}ess \textbf{I}s \textbf{M}ore (\textbf{LIM})}''.
In our two unsupervised methods GGC and GGCM, after obtaining the node embeddings, we train a linear logistic regression classifier for label prediction on the training set, and conduct a grid search to tune all hyper-parameters on the validation set.
More experimental details can be found in Appendix~\ref{app_datasets_imps}.

\vspace{-5pt}
\subsection{Transductive Node Classification}\label{sub_sect_node_clf}
\vspace{-5pt}
Table~\ref{tab_node_classification} summarizes the classification results.
Firstly, we can clearly see that our supervised method OGC consistently outperforms all baselines by a large margin.
This indicates that the ability to include validation labels is an advantage of our approach.
Secondly, our two unsupervised methods GGC and GGCM are also very powerful; especially, GGCM achieves very competitive performance on all three datasets.
It's worth noting that very deep models, such as GCNII and GRAND, typically involve huge amounts of parameters to learn, contain lots of hyper-parameters to tune, and take a long time to train.
Moreover, they all have to retrain the model totally, once the layer number is fixed.
On the contrary, our three methods, which all run in an iterative way with very few parameters, do not have these issues.
Finally, we also highlight that: \mygspalg\ and GGCM greatly outperform Eigen-GCN whose experimental results show that preserving graph structure information in GCN-type models may hurt the performance.
In contrast, our work evidently demonstrates this value.

\textbf{The Effect of Depths/Iterations.}
As shown in Fig.~\ref{fig_acc_diff_k}, in most cases, the performance of our methods increases as the iteration goes on.
Specifically, they all tend to get the best performance at around 16-th or 32-th iteration, and could maintain similar performance as the iteration number increases to 64.
In addition, we give the detailed results of depth/iteration effect experiment in Appendix~\ref{app_sect_exp_depth}, an efficiency test in Appendix~\ref{app_exp_eff_test}, supervised label setting test in Appendix~\ref{app_sect_train_val}, and various setting test of our methods in Appendix~\ref{app_sect_various_set_ours}.

\vspace{-5pt}
\subsection{Inductive Node Classification}
\vspace{-5pt}
For baselines, besides the above-mentioned methods GCN, DGI, SGC and S2GC, we further adopt GaAN~\citep{zhang2018gaan}, NDLS~\citep{zhang2021node}, supervised and unsupervised variants of GraphSAGE~\citep{hamilton2017inductive} and FastGCN~\citep{chen2018fastgcn}.
We also reuse the metrics already reported in ~\citet{wu2019simplifying} and ~\citet{chen2018fastgcn}.
Table~\ref{tab_reddit} shows the averaged test results. As expected, our supervised method OGC could still get the best performance.
In addition, our unsupervised methods GGC and GGCM still outperform SGC, S2GC, all other unsupervised methods, and even some supervised GNNs.

\begin{figure*}[!ht]
  \begin{minipage}{.45\textwidth}
  \centering
  \scriptsize
    \renewcommand{\arraystretch}{0.8}
    \begin{tabular}{l|l|l}
        \toprule
        \textbf{Setting} & \textbf{Method} & \textbf{Test F1} \\
         \midrule
        \multirow{6}{*}{\shortstack[c]{\textbf{Supervised}}}
        & GaAN  & $96.4$ \\
        & SAGE-mean & $95.0$\\
        & SAGE-LSTM & $95.4$\\
        & SAGE-GCN & $93.0$\\
        & FastGCN & $93.7$\\
        & GCN & OOM \\
        & NDLS & $96.8$ \\
        & \textbf{OGC} (ours) & \textbf{97.9}\\
        \midrule
        \multirow{9}{*}{\shortstack[c]{\textbf{Unsupervised}}}
        & SAGE-mean & $89.7$ \\
        & SAGE-LSTM & $90.7$\\
        & SAGE-GCN  & $90.8$\\
        & DGI & $94.0$\\
        & SGC & $94.9 $ \\
        & S$^2$GC & $95.3 $ \\
        & \textbf{\mygspalg} (ours) & $95.0$ \\
        & \textbf{\mygspalg M} (ours) & \textbf{95.8}\\
        \bottomrule
        \end{tabular}
    \captionof{table}{\footnotesize{Micro-averaged F1 (\%) on Reddit. }}
    \label{tab_reddit}
  \end{minipage}
  \quad
  \begin{minipage}{.55\textwidth}
  \scriptsize
  \setlength{\tabcolsep}{1.5mm}
  \renewcommand{\arraystretch}{0.95}
    \begin{tabular}{l|l|cccccc}
        \toprule
        \multirow{2}{*}{\textbf{Dataset}} & \multirow{2}{*}{\textbf{Method}} & \multicolumn{6}{c}{\textbf{Layers/Iterations}} \\
             &  & 2  & 4  & 8 & 16 & 32 & 64 \\
        \midrule
        \multirow{4}{*}{\shortstack[c]{\textbf{Cora}}}
        & SGC &68.15 &60.47 &50.38 &41.76 &31.93 &20.51\\
        & S$^2$GC & 84.84 &80.49 &76.74 &73.87 &72.55 &72.05 \\
        & \textbf{\mygspalg} & 81.19 &81.22 &81.22 &81.22 &81.22 &81.22 \\
        & \textbf{\mygspalg M} & \textbf{88.97} &86.92 &84.98 &83.93 &82.96 &82.94\\
        \midrule
        \multirow{4}{*}{\shortstack[c]{\textbf{Citeseer}}}
        & SGC &74.89 &58.26 &50.84 &44.56 &37.60 &31.16\\
        & S$^2$GC & 90.66 &88.67 &84.99 &78.04 &71.10 &68.17 \\
        & \textbf{\mygspalg} &88.66 &88.87 &88.90 &88.90 &88.90 &88.90 \\
        & \textbf{\mygspalg M} &\textbf{91.94} &91.62 &91.11 &90.64 &90.10 &89.98\\
        \midrule
        \multirow{4}{*}{\shortstack[c]{\textbf{Pubmed}}}
        & SGC &39.57 &27.81 &17.12 &13.06 &8.72 &4.341\\
        & S$^2$GC &56.00 &54.63 &50.65 &48.78 &48.14 &47.07 \\
        & \textbf{\mygspalg} &46.88 &49.48 &49.96 &50.00 &50.00 &50.00 \\
        & \textbf{\mygspalg M} &56.95 &\textbf{57.90} &57.63 &56.82 &56.47 &55.90\\
        \bottomrule
        \end{tabular}
    \captionof{table}{\footnotesize{Graph reconstruction accuracy. }}
    \label{tab_graph_recons_eculid}
  \end{minipage}

\end{figure*}

\vspace{-5pt}
\subsection{Graph Reconstruction}
\vspace{-5pt}

Following~\citet{tsitsulin_2018_www}, to avoid interference, we only use graph structure information by setting $X=I_{n}$.
Then, for each node, we rank all other nodes according to their distance to this one in the embedding space based on Euclidean distance.
Then, we take the same number of nodes equal to its actual degree as its predicted neighbors.
Next, we count the rate of correct predictions.
Finally, the graph reconstruction accuracy is computed as the average over all nodes.
For comparison, we adopt two no-learning baselines: SGC and S$^2$GC.
As shown in Table~\ref{tab_graph_recons_eculid}, the performance of SGC and S$^2$GC declines quickly as the iteration number increases.
In contrast, our two methods \mygspalg\ and GGCM perform much better.
Especially, by gathering the multi-scale information, GGCM always obtains the best performance.

\vspace{-5pt}
\section{Related Work}\label{sect_relate}
\vspace{-5pt}
In this section, we review the related work on understanding GNNs and fixing the over-smoothing problem.
More comparison and related work discussion can be found in Appendix~\ref{app_sect_dis_related}.

\textbf{Understanding GNNs.}
Generally, existing studies can be categorized into two groups.
The first group analyses the intrinsic properties of GNNs, including the equivalence of GNNs to the Weisfeiler-Lehman test~\citep{xu2018powerful}, the low-pass filtering feature of GNNs~\citep{nt2019revisiting}, and the expressive power of K-hop message passing in GNNs~\citep{fengpowerful2023}. The second group connects GNNs to non GNN-type graph learning methods, such as traditional graph kernels~\citep{fu2020understanding,zhu2021interpreting,ma2021unified}, classical iterative algorithms~\citep{yang2021graph}, and implicit layers and nonlinear diffusion steps~\citep{chen2022optimization}.
This study falls under the second group.
However, unlike existing discussions limited to the graph convolution process of GNNs, we additionally consider the label learning part, by integrating these two parts into a unified optimization framework.

\textbf{Fixing the Over-smoothing Problem.}
Recent studies on this problem mainly follow two lines.
The first line is to make deep GNNs trainable.
For example, JKNet~\citep{xu2018representation} uses skip-connection, DropEdge~\citep{rong2019dropedge} suggests randomly removing out some graph edges, GCNII~\citep{chen2020simple} adopts initial residual and identity mapping, and OrderedGNN~\citep{songordered2023} adopts ordering message passing.
On the other hand, some approaches try to combine deep propagation with shallow neural networks.
For example, SGC~\citep{wu2019simplifying} and S$^2$GC~\citep{zhusimple2021} use the $K$-th power of the graph convolution matrix, and APPNP~\citep{klicpera2018predict} adopts personalized PageRank~\citep{page1999pagerank}.
In comparison, our work is inspired by the fact that traditional GSSL methods do not have the over-smoothing problem, and tries to fix this problem within a unified GSSL optimization framework.

\vspace{-5pt}
\section{Conclusion}\label{sect_conclusion}
\vspace{-5pt}
In this work, by using a unified GSSL optimization framework, we find that compared to traditional GSSL methods, the learning process of typical GCNs may fail to jointly consider the graph structure and label information at each layer.
Motivated by this, we introduce two novel operators (SEB and IGC), based on which we further propose three simple but powerful graph convolution methods.
Extensive experiments demonstrate that our methods exhibit superior performance against state-of-the-art methods.
We believe that introducing supervised information or preserving the graph structure in the convolution process of deep GCNs can bring substantial improvement, especially considering the computational overhead that most GCNs require.

\bibliography{iclr2024_conference}
\bibliographystyle{iclr2024_conference}

\newpage
\appendix
\begin{appendices}
\section{Datasets and Implementation Details}\label{app_datasets_imps}

\begin{table*}[!ht]
\caption{Summary of the datasets.}
\label{tab_datasets}
\begin{center}
    \begin{tabular}{c c c c c c c}
    \toprule
    {\bf Dataset} & {\bf Task} & {\bf Nodes} & {\bf Edges} & {\bf Features} &  {\bf Classes} & {\bf Train/Val/Test Nodes} \\ \midrule
    {\bf Cora} & Transductive & 2,708 & 5,429 & 1,433 & 7 & 140/500/1,000\\
    {\bf Citeseer} & Transductive & 3,327 & 4,732 & 3,703 & 6 & 120/500/1,000 \\
    {\bf Pubmed} & Transductive & 19,717 & 44,338 & 500 & 3 & 60/500/1,000\\
    {\bf Reddit} & Inductive & 232,965 & 11,606,919 & 602 & 41 &  152,410/23,699/55,334\\
    \bottomrule
    \end{tabular}
\end{center}
\end{table*}

\subsection{Datasets}
Table~\ref{tab_datasets} summarizes the statistics of the used three citation networks (Cora, Citeseer, and Pubmed~\citep{sen2008collective}) and one social network Reddit~\citep{hamilton2017inductive}.
In these three citation networks, nodes are documents, edges are citations, and each node feature is the bag-of-words representation of the document belonging to one of the topics.
We adopt the standard data-split setting and feature preprocessing used in~\citep{kipf_2017_iclr,yang2016revisiting}.
Experiments on these citation networks are transductive, i.e., all nodes are accessible during training.
For inductive learning, we use a much larger social network dataset Reddit~\citep{hamilton2017inductive}.
We also adopt the standard data-split setting and feature preprocessing used in ~\citep{wu2019simplifying}.
In this graph, testing nodes are unseen during the model learning process.

\subsection{Implementation Details}
In all our methods, we always set the maximum iteration number to 64.
Specifically, in our supervised method \myoptalg, we adopt squared loss in its convolution process (defined in Eq.~\ref{eq_update_X_Qsup_Q_reg}).
As OGC is actually a shallow method, it does not need the validation set to avoid overfitting.
Therefore, like LP~\citep{zhu_2003_icml} and C\&S~\citep{huang2020combining}, we use both training and validation sets as the supervised knowledge.
In addition, to further avoid the overfitting rick of the learned node embedding, when updating $U$ in Eq.~\ref{eq_update_X_Qsup_Q_reg_lazy}, we only use the train set to build the label indicator diagonal matrix $S$.
We call this trick ``\emph{\textbf{L}ess \textbf{I}s \textbf{M}ore (\textbf{LIM})}''.
All its hyper-parameters are chosen to consistently improve the classification accuracy on the labeled set during the iteration process.
At last, we will stop its iteration when the predicted labels converge.

In our two unsupervised methods GGC and GGCM, after obtaining the node embeddings at each iteration, we train a classifier for label prediction.
Specifically, like~\citep{wu2019simplifying} and ~\citep{zhusimple2021}, we train a linear logistic regression classifier for 100 epochs, with weight decay, Adam~\citep{kingma2014adam} and learning rate 0.2.
Besides, we also iteratively adjust the moving probability values in these two methods by multiplying $\beta$ with a decline rate.
In these two methods, we both conduct a grid search to tune all hyper-parameters on the validation set.
Especially, for the used negative graph ${}^{\neg} A$ in IGC, the edge number is tuned among $\{|\mathcal{E}|, 10|\mathcal{E}|, 20|\mathcal{E}|, 30|\mathcal{E}|, 40|\mathcal{E}|, 50|\mathcal{E}|\}$, where $|\mathcal{E}|$ is the edge number of the original graph.
At last, the node embeddings together with the corresponding classifier, which obtain the best performance on the validation set, are used in the subsequent experiments.

Our codes are all written in Python 3.8.3, PyTorch 1.7.1, NumPy 1.16.4, SciPy 1.3.0, and CUDA 11.0.
All experiments are conducted for 100 trials with random seeds.

\section{Theoretical Analysis}\label{app_sect_proofs}

\subsection{Proof of Theorem~\ref{corol_proportion_sgc_gcn}}\label{app_subsect_proof_corol_proportion_sgc}

\begin{proof}
In SGC, as conducting graph convolution only affects the Laplacian smoothing loss (i.e., $\bar{\mathcal{Q}}_{smo}$) in Eq.~\ref{eq_loss_graph_ssl_sgc}, in the following, we only consider this loss term. In a connected component, setting $U^{(k)}_{i}=\sqrt{D_{ii}+1} \overrightarrow{\tau} = \sqrt{\tilde{D}_{ii}}\overrightarrow{\tau} $ and $U^{(k)}_{j}=\sqrt{D_{jj}+1} \overrightarrow{\tau}=\sqrt{\tilde{D}_{jj}}\overrightarrow{\tau}$ will lead the involved Laplacian smoothing loss part in this component to reach $0$, where $\overrightarrow{\tau}$ is an arbitrary $d$-dimensional vector.
Repeating this setting in every connected graph component will lead the loss $\bar{\mathcal{Q}}_{smo}$ in Eq.~\ref{eq_loss_graph_ssl_sgc} to reach its lower bound (i.e., $0$), which proves the theorem in SGC.

In GCN, the analysis is analogous to that in SGC.
Specifically, we can similarly prove this theorem by setting $H^{(k)}_{i}=\sqrt{D_{ii}+1} \overrightarrow{\tau^{(k)}} = \sqrt{\tilde{D}_{ii}}\overrightarrow{\tau^{(k)}} $ and $H^{(k)}_{j}=\sqrt{D_{jj}+1} \overrightarrow{\tau^{(k)}}=\sqrt{\tilde{D}_{jj}}\overrightarrow{\tau^{(k)}}$ to reach the lower bound (i.e., $0$) of $\hat{\mathcal{Q}}^{(k)}_{smo}$ defined in Eq.~\ref{eq_loss_graph_ssl_gcn_kth}, where $\overrightarrow{\tau^{(k)}}$ is an arbitrary $d^{(k)}$-dimensional vector at the $k$-th iteration.
\end{proof}

\subsection{Proof of Theorem~\ref{the_igc resp}}
\begin{proof}
First of all, we consider a ``negative'' regular graph whose degree is $\xi$.
We can write its diagonal degree matrix as ${}^{\neg}D=\xi I_{n}$, and write its symmetrically normalized Laplacian matrix as ${}^{\neg}L_{sym} = I_{n} - {}^{\neg}A/ \xi$ (and ${}^{\neg}A = \xi I_{n} - \xi {}^{\neg}L_{sym}$).
According to the definition of IGC (in Eq.~\ref{eq_inver_gc_update_gcn_trick}), we can get:
    \begin{equation}
        \begin{aligned}
       {}^{\neg}\tilde{D}^{-\frac{1}{2}}{}^{\neg}\tilde{A}{}^{\neg}\tilde{D}^{-\frac{1}{2}} &= {}^{\neg}\tilde{D}^{-\frac{1}{2}} (2I_{n}-{}^{\neg}A) {}^{\neg}\tilde{D}^{-\frac{1}{2}}\\
         &= \frac{2I_{n}-\xi I_{n}+\xi {}^{\neg} L_{sym}}{\xi+2} \text{ {         }     (By replacing variables)}\\
         &=Q\left(\frac{(2-\xi)I_{n}+\xi\diag(\boldsymbol{\lambda})}{\xi+2}\right)Q^T.
        \end{aligned}
    \end{equation}
    Therefore, w.r.t. this regular graph, we can get the following filter function:
    \begin{equation}
        \begin{aligned}
         \Phi(\boldsymbol{\lambda})=\frac{2-\xi+\xi\boldsymbol{\lambda}}{\xi+2}.
        \end{aligned}
    \end{equation}
    Assuming that the node degree distribution is uniform, we can get the approximation: $\xi \approx \bar{\xi}=\frac{1}{n}\sum\limits_{i=1}\limits^n {}^\neg D_{ii}$.
    Likewise, we can obtain an approximation of its filter function as a function of the average node degree by replacing $\xi$ with $\bar{\xi}$ in the above equation, which proves the theorem.
\end{proof}

\subsection{Proof of Theorem~\ref{the_igc_bound}}
\begin{proof}
	Let $\epsilon_1\le \epsilon_2\le \cdots \le \epsilon_n$ denote the eigenvalues of ${}^{\neg}D^{-\frac{1}{2}} {}^{\neg}A {}^{\neg}D^{-\frac{1}{2}}$ and  $\omega_1\le \omega_2\le \cdots \le \omega_n$ denote the eigenvalues of ${}^{\neg}\tilde{D}^{-\frac{1}{2}} {}^{\neg}A {}^{\neg}\tilde{D}^{-\frac{1}{2}}$.
It is easy to know $\epsilon_n=1$.
In addition, by choosing $x$ such that $||x||=1$ and $y={}^{\neg}D^{\frac{1}{2}}{}^{\neg}\tilde{D}^{-\frac{1}{2}}x$, we can get $||y||^2=\sum_i\frac{{}^{\neg}D_{ii}}{{}^{\neg}D_{ii}+2}x_i^2$ and $\frac{\min_i {}^{\neg}D_{ii}}{2+\min_i {}^{\neg}D_{ii}} \le ||y||^2 \le \frac{\max_i {}^{\neg}D_{ii}}{2+\max_i {}^{\neg}D_{ii}}$.

First of all, we use the Rayleigh quotient to provide a higher bound to $\omega_n$:
\begin{equation}\label{eq_h_bound_omega}
	    \begin{aligned}
	    \omega_n &= \max\limits_{||x||=1}\left(x^T{}^{\neg}\tilde{D}^{-\frac{1}{2}} {}^{\neg}A {}^{\neg}\tilde{D}^{-\frac{1}{2}}x\right) \\
        &= \max\limits_{||x||=1}\left(y^T {}^{\neg}D^{-\frac{1}{2}} {}^{\neg}A {}^{\neg}D^{-\frac{1}{2}}y\right) \text{ \text{  } (By replacing variable)} \\
	    &= \max\limits_{||x||=1}\left(\frac{y^T {}^{\neg}D^{-\frac{1}{2}} {}^{\neg}A {}^{\neg}D^{-\frac{1}{2}}y}{||y||^2}||y||^2\right) \\
        &\text{$\because \max(\mathfrak{AB})=\max(\mathfrak{A})\max(\mathfrak{B})$ if $\max(\mathfrak{A})>0$, $\forall \mathfrak{B}>0$;} \\
        & \text{ and $\max\limits_{||x||=1}\left(\nicefrac{y^T {}^{\neg}D^{-\frac{1}{2}} {}^{\neg}A {}^{\neg}D^{-\frac{1}{2}}y}{||y||^2}\right)=\epsilon_n>0$:}\\
	    &= \epsilon_n \max\limits_{||x||=1}||y||^2 \\
	    &\le \frac{\max_i {}^{\neg}D_{ii}}{2+ \max_i {}^{\neg}D_{ii}}  \text{ \text{   } (See the properties of $||y||^2$)}
	    \end{aligned}
\end{equation}
	Then, by rewriting matrix ${}^{\neg}\tilde{D}^{-\frac{1}{2}}( {}^{\neg}\tilde{D} - {}^{\neg}\tilde{A}){}^{\neg}\tilde{D}^{-\frac{1}{2}}$ as $I_{n}-{}^{\neg}\tilde{D}^{-\frac{1}{2}}(2I_{n}-{}^{\neg}A){}^{\neg}\tilde{D}^{-\frac{1}{2}}$, we can get:
	\begin{equation}
	    \begin{aligned}
	     {}^\neg \lambda_n &= \max\limits_{||x||=1}x^T\left(I_{n}-2{}^{\neg}\tilde{D}^{-1}+{}^{\neg}\tilde{D}^{-\frac{1}{2}} {}^{\neg}A {}^{\neg}\tilde{D}^{-\frac{1}{2}}\right)x\\
	     &\le 1-\min\limits_{||x||=1}2x^T{}^{\neg}\tilde{D}^{-1}x+\max\limits_{||x||=1}\left(x^T{}^{\neg}\tilde{D}^{-\frac{1}{2}} {}^{\neg}A {}^{\neg}\tilde{D}^{-\frac{1}{2}}x\right)\\
         &\text{$\because$ $\omega_n$ is the largest eigenvalue of ${}^{\neg}\tilde{D}^{-\frac{1}{2}} {}^{\neg}A {}^{\neg}\tilde{D}^{-\frac{1}{2}}$} \\
	     &= 1-\frac{2}{2+\max_i {}^{\neg}D_{ii}}+\omega_n \\
         &\le 1-\frac{2}{2+\max_i {}^{\neg}D_{ii}}+\frac{\max_i {}^{\neg}D_{ii}}{2+\max_i {}^{\neg}D_{ii}} \\ & \text{(The higher bound of $\omega_n$ shown in Eq.~\ref{eq_h_bound_omega})}\\
	     &= 2-\frac{4}{2+\max_i {}^{\neg}D_{ii}}.
	    \end{aligned}
	\end{equation}
\end{proof}

\subsection{Theoretical analysis of GGC}\label{app_subsect_proof_theorem_ggc_obj}
\begin{theorem}\label{the_igc_obj}
IGC (defined in Eq.~\ref{eq_inver_gc_update_gcn_trick}) can be reformulated as a gradient descent procedure: $U^{(k+1)} = U^{(k)} - \frac{1}{2} \frac{\partial \mathcal{Q}_{igc}}{\partial U^{(k)}}$, where $\mathcal{Q}_{igc}=\tr( U^\top {}^{\neg}\tilde{D}^{-\frac{1}{2}}( {}^{\neg}\tilde{D} - {}^{\neg}\tilde{A}){}^{\neg}\tilde{D}^{-\frac{1}{2}} U )$.
\end{theorem}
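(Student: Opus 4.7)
The plan is to show that the IGC update rule in Eq.~\ref{eq_inver_gc_update_gcn_trick} coincides exactly with one step of vanilla gradient descent on $\mathcal{Q}_{igc}$ with step size $\tfrac{1}{2}$. This mirrors the derivation used earlier to cast the standard graph convolution as a gradient step on the Laplacian smoothing loss $\bar{\mathcal{Q}}_{smo}$ (Appendix~\ref{app_subsect_dev_sgc}).

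First, I would introduce the shorthand $M = {}^{\neg}\tilde{D}^{-\frac{1}{2}}( {}^{\neg}\tilde{D} - {}^{\neg}\tilde{A}){}^{\neg}\tilde{D}^{-\frac{1}{2}}$ and observe that $M$ is symmetric, because ${}^{\neg}\tilde{A} = 2I_n - {}^{\neg}A$ and ${}^{\neg}\tilde{D} = 2I_n + {}^{\neg}D$ are both symmetric (in fact diagonal for ${}^{\neg}\tilde{D}$), so $\mathcal{Q}_{igc} = \tr(U^\top M U)$ is a symmetric quadratic form in $U$.

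Next, I would apply the standard matrix-calculus identity $\partial \tr(U^\top M U)/\partial U = (M + M^\top)U$. Using symmetry of $M$, this yields $\partial \mathcal{Q}_{igc}/\partial U^{(k)} = 2 M U^{(k)}$, so the proposed gradient descent step becomes
\begin{equation*}
U^{(k)} - \tfrac{1}{2}\,\partial \mathcal{Q}_{igc}/\partial U^{(k)} = (I_n - M)\,U^{(k)}.
\end{equation*}

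The final step is a one-line algebraic simplification: since ${}^{\neg}\tilde{D}^{-\frac{1}{2}}\,{}^{\neg}\tilde{D}\,{}^{\neg}\tilde{D}^{-\frac{1}{2}} = I_n$, we obtain $M = I_n - {}^{\neg}\tilde{D}^{-\frac{1}{2}}{}^{\neg}\tilde{A}{}^{\neg}\tilde{D}^{-\frac{1}{2}}$, and hence $I_n - M = {}^{\neg}\tilde{D}^{-\frac{1}{2}}{}^{\neg}\tilde{A}{}^{\neg}\tilde{D}^{-\frac{1}{2}}$, matching exactly the propagation matrix of Eq.~\ref{eq_inver_gc_update_gcn_trick}. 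There is no genuine obstacle here; the only subtlety worth flagging is that $\mathcal{Q}_{igc}$ is not identical to the original sharpening objective $\mathcal{Q}_{sharp}$ of Eq.~\ref{eq_inv_smoothness_loss_self_loop}, but rather the ``re-normalized'' quadratic form induced by replacing ${}^{\neg}D, {}^{\neg}A$ with ${}^{\neg}\tilde{D}, {}^{\neg}\tilde{A}$, so the theorem should be read as the implicit objective being minimized by IGC after the re-normalization trick.
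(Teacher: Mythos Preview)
Your proposal is correct and follows essentially the same route as the paper's proof: compute the gradient of the quadratic form $\tr(U^\top M U)$ with $M={}^{\neg}\tilde{D}^{-\frac{1}{2}}({}^{\neg}\tilde{D}-{}^{\neg}\tilde{A}){}^{\neg}\tilde{D}^{-\frac{1}{2}}$, take a gradient step with learning rate $\tfrac{1}{2}$, and simplify $I_n-M$ to ${}^{\neg}\tilde{D}^{-\frac{1}{2}}{}^{\neg}\tilde{A}{}^{\neg}\tilde{D}^{-\frac{1}{2}}$. The only differences are cosmetic: you make the symmetry of $M$ and the matrix-calculus identity explicit, and you add a helpful clarifying remark distinguishing $\mathcal{Q}_{igc}$ from $\mathcal{Q}_{sharp}$, whereas the paper simply writes the gradient directly and sets a generic learning rate $\eta_{igc}=\tfrac{1}{2}$.
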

\begin{proof}
We can minimize the loss $\mathcal{Q}_{igc}$ in Theorem~\ref{the_igc_obj} by the standard gradient descent:
\begin{equation}
    \begin{aligned}
    U^{(k+1)} &= U^{(k)} - \eta_{igc} \frac{\partial \mathcal{Q}_{igc}}{\partial U^{(k)}} \\
              &= U^{(k)} - 2\eta_{igc} {}^{\neg}\tilde{D}^{-\frac{1}{2}}( {}^{\neg}\tilde{D} - {}^{\neg}\tilde{A}){}^{\neg}\tilde{D}^{-\frac{1}{2}} U^{(k)},
    \end{aligned}
\end{equation}
where $\eta_{igc}$ is the learning rate in this part. If we set $\eta_{igc}=\frac{1}{2}$, we can get:
\begin{equation}
    \begin{aligned}
    U^{(k+1)} &= U^{(k)} -  {}^{\neg}\tilde{D}^{-\frac{1}{2}}( {}^{\neg}\tilde{D} - {}^{\neg}\tilde{A}){}^{\neg}\tilde{D}^{-\frac{1}{2}} U^{(k)} \\
     &= ({}^{\neg}\tilde{D}^{-\frac{1}{2}}  {}^{\neg}\tilde{A} {}^{\neg}\tilde{D}^{-\frac{1}{2}})U^{(k)}.
    \end{aligned}
\end{equation}
The above equation is exactly the convolution equation (i.e., Eq.~\ref{eq_inver_gc_update_gcn_trick}) used in IGC.
Therefore, the theorem is proved.
\end{proof}

\begin{theorem}\label{the_ggc_obj}
The objective function of GGC is: $\min_{\substack{U, U^{(0)} = X}}\mathcal{Q}_{ggc} = \nicefrac{(\bar{\mathcal{Q}}_{smo} +  \mathcal{Q}_{igc})}{2}$,
where $\bar{\mathcal{Q}}_{smo}$ is the Laplacian smoothing loss defined in Eq.~\ref{eq_loss_graph_ssl_sgc} and $\mathcal{Q}_{igc}$ is the loss function of IGC defined in Theorem~\ref{the_igc_obj}.
\end{theorem}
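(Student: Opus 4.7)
The plan is to show that the per-iteration update rule of GGC coincides exactly with a single gradient descent step on the averaged objective $\mathcal{Q}_{ggc} = (\bar{\mathcal{Q}}_{smo}+\mathcal{Q}_{igc})/2$, after which the theorem follows by the usual correspondence between iterative schemes and their underlying objectives. The initialization constraint $U^{(0)}=X$ is already imposed by both constituent objectives (see Eq.~\ref{eq_loss_graph_ssl_sgc} and Eq.~\ref{eq_inv_smoothness_loss_self_loop}), so it transfers to $\mathcal{Q}_{ggc}$ without further argument.

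First, I would recall the two ingredient update rules in gradient-descent form. From the SGC analysis in Section~\ref{subsect_sgc_analysis}, the (non-lazy) graph convolution applied to $U^{(k)}$ can be written as
\begin{equation*}
U^{(k)}_{smo} \;=\; U^{(k-1)} \;-\; \tfrac{1}{2}\,\frac{\partial \bar{\mathcal{Q}}_{smo}}{\partial U^{(k-1)}}.
\end{equation*}
From Theorem~\ref{the_igc_obj}, the IGC step (which is also the ``sharp'' branch in GGC) can be written as
\begin{equation*}
U^{(k)}_{sharp} \;=\; U^{(k-1)} \;-\; \tfrac{1}{2}\,\frac{\partial \mathcal{Q}_{igc}}{\partial U^{(k-1)}}.
\end{equation*}

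Next, I would invoke the linearity of differentiation. Averaging the two branches, as GGC prescribes, gives
\begin{equation*}
U^{(k)} \;=\; \tfrac{1}{2}\bigl(U^{(k)}_{smo} + U^{(k)}_{sharp}\bigr) \;=\; U^{(k-1)} \;-\; \tfrac{1}{2}\,\frac{\partial}{\partial U^{(k-1)}}\!\left(\frac{\bar{\mathcal{Q}}_{smo}+\mathcal{Q}_{igc}}{2}\right) \;=\; U^{(k-1)} \;-\; \tfrac{1}{2}\,\frac{\partial \mathcal{Q}_{ggc}}{\partial U^{(k-1)}},
\end{equation*}
so the GGC iterate is exactly one gradient descent step on $\mathcal{Q}_{ggc}$ with step size $1/2$. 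Combined with the initial condition $U^{(0)}=X$, this is precisely the scheme that the theorem claims minimizes $\mathcal{Q}_{ggc}$.

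The main obstacle I anticipate is handling the ``lazy'' variants cleanly, since GGC is actually presented with lazy graph convolution and lazy IGC (i.e., a moving probability $\beta$). As noted in Section~\ref{subsect_sgc_analysis}, the lazy version corresponds to gradient descent with a smaller learning rate $\beta/2$ rather than $1/2$, so I would argue that the averaging argument above extends verbatim with $1/2$ replaced by $\beta/2$, still producing a descent step on the same $\mathcal{Q}_{ggc}$ (only the step size, not the objective, changes). A minor bookkeeping point is to note that the symmetric normalization used in both $\bar{\mathcal{Q}}_{smo}$ and $\mathcal{Q}_{igc}$ already matches the normalizations in the convolution matrices $\tilde{D}^{-\frac{1}{2}}\tilde{A}\tilde{D}^{-\frac{1}{2}}$ and ${}^{\neg}\tilde{D}^{-\frac{1}{2}}{}^{\neg}\tilde{A}{}^{\neg}\tilde{D}^{-\frac{1}{2}}$, so no extra rescaling is needed when combining gradients.
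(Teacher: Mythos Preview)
Your proposal is correct and mirrors the paper's proof almost exactly: both arguments rest on splitting $U^{(k)}=\tfrac{1}{2}U^{(k)}+\tfrac{1}{2}U^{(k)}$ and using linearity of the gradient to identify the average of the two branch updates with a single gradient step on $(\bar{\mathcal{Q}}_{smo}+\mathcal{Q}_{igc})/2$. The only cosmetic difference is direction---the paper starts from gradient descent on $\mathcal{Q}_{ggc}$ (with separate learning rates $\eta_{smo},\eta_{igc}$ to accommodate the lazy variants) and arrives at the GGC averaging rule, whereas you start from the GGC rule and arrive at the objective; your treatment of the lazy case is, if anything, slightly cleaner than the paper's.
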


\begin{proof}
We can minimize the loss $\mathcal{Q}_{ggc}$ in Theorem~\ref{the_ggc_obj} by the standard gradient descent:
\begin{equation}\label{eq_proof_ggc_obj}
    \begin{aligned}
    U^{(k+1)} &= U^{(k)}  - \frac{\eta_{smo}}{2}\frac{\partial\bar{\mathcal{Q}}_{smo}}{\partial U^{(k)}} - \frac{\eta_{igc}}{2} \frac{\partial \mathcal{Q}_{igc}}{\partial U^{(k)}} \\
              &= \frac{U^{(k)}}{2} -  \frac{\eta_{smo}}{2}\frac{\partial\bar{\mathcal{Q}}_{smo}}{\partial U^{(k)}} + \frac{U^{(k)}}{2} - \frac{\eta_{igc}}{2} \frac{\partial \mathcal{Q}_{igc}}{\partial U^{(k)}}\\
              &= \frac{U^{(k)} -\eta_{smo}\frac{\partial\bar{\mathcal{Q}}_{smo}}{\partial U^{(k)}}}{2} + \frac{U^{(k)} -\eta_{igc}\frac{\partial\bar{\mathcal{Q}}_{igc}}{\partial U^{(k)}}}{2},
    \end{aligned}
\end{equation}
where $\eta_{smo}$ and $\eta_{igc}$ are the learning rates.

Those two parts in the last line of Eq.~\ref{eq_proof_ggc_obj} are exactly the one half of the results of (lazy) graph convolution and IGC. Therefore, this theorem is proved.
\end{proof}

\clearpage

 \begin{algorithm}[!tb]
 \caption{OGC}
 \label{alg_ogc}
 \begin{algorithmic}[1]
 \Require Graph information ($A$ and $X$), the max iteration number $K$, and the label information of a small node set $\mathcal{V}_{L}$
 \Ensure The label predictions
 \State Initialize $U^{(0)}= X$ and $k = 0$
 \Repeat
     \State $k = k + 1$
     \State Update $W$ manually or by some automatic-differentiation toolboxes (Sect.~\ref{sect_optsgc})
     \State Update $U^{(k)}$ via (lazy) supervised graph convolution (Eq.~\ref{eq_update_X_Qsup_Q_reg_lazy})
     \State Get the label predictions $\hat{Y}^{(k)}$ from: $Z^{(k)}=U^{(k)}W$
  \Until{$\hat{Y}^{(k)}$ converges or $k \geq K$} \\
  \Return $\hat{Y}^{(k)}$
 \end{algorithmic}
 \end{algorithm}

 \begin{algorithm}[!tb]
 \caption{\mygspalg}
 \label{alg_ggc}
 \begin{algorithmic}[1]
 \Require Graph information ($A$ and $X$), the max iteration number $K$, and moving out probability $\beta$
 \Ensure The learned node embedding result set $\{U^{(k)}|k=0:K \}$
 \State Initialize $U^{(0)}= X$
 \For{$k=1$ to $K$}
     \State Get $U^{(k)}_{smo}$ via (lazy) graph convolution
     \State Get $U^{(k)}_{sharp}$ via (lazy) IGC (Eq.~\ref{eq_inver_gc_update_gcn_trick})
     \State Get $U^{(k)}= \nicefrac{(U^{(k)}_{smo}+U^{(k)}_{sharp})}{2}$
     \State Decline $\beta$ with a decay factor
  \EndFor\\
  \Return $\{U^{(0)}, U^{(1)}, ..., U^{(K)}\}$
 \end{algorithmic}
 \end{algorithm}

\begin{algorithm}[!tb]
 \caption{\mygspalg M}
 \label{alg_ggcm}
 \begin{algorithmic}[1]
 \Require Graph information ($A$ and $X$), the max iteration number $K$, and moving out probability $\beta$
 \Ensure The learned multi-scale node embedding result set $\{{U_{M}}^{(k)}|k=0:K \}$
 \State Initialize $U^{(0)}= X$
 \For{$k=1$ to $K$}
     \State Get $U^{(k)}_{smo}$ via (lazy) graph convolution
     \State Get $U^{(k)}_{sharp}$ via (lazy) IGC (Eq.~\ref{eq_inver_gc_update_gcn_trick})
     \State Set $U^{(k)}= U^{(k)}_{smo}$
     \State ${U_{M}}^{(k)}  = \alpha X + (1-\alpha) \frac{1}{k}\sum_{t=1}^{k} [\nicefrac{(U^{(t)}_{smo} + U^{(t)}_{sharp})}{2}]$
     \State Decline $\beta$ with a decay factor
  \EndFor\\
  \Return $\{ {{U_{M}}^{(0)}}, {{U_{M}}^{(1)}}, ..., {{U_{M}}^{(K)}} \}$
 \end{algorithmic}
 \end{algorithm}

\clearpage
\section{Additional Experiments}

\begin{figure*}[ht]\centering
\subfigure[SGC]{\includegraphics[width=1.0\textwidth]{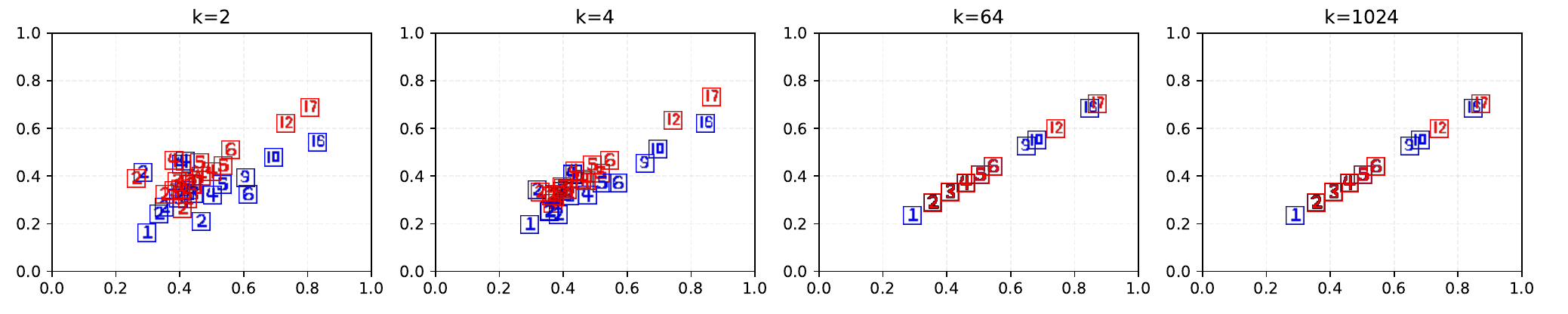}}
\subfigure[GCN]{\includegraphics[width=1.0\textwidth]{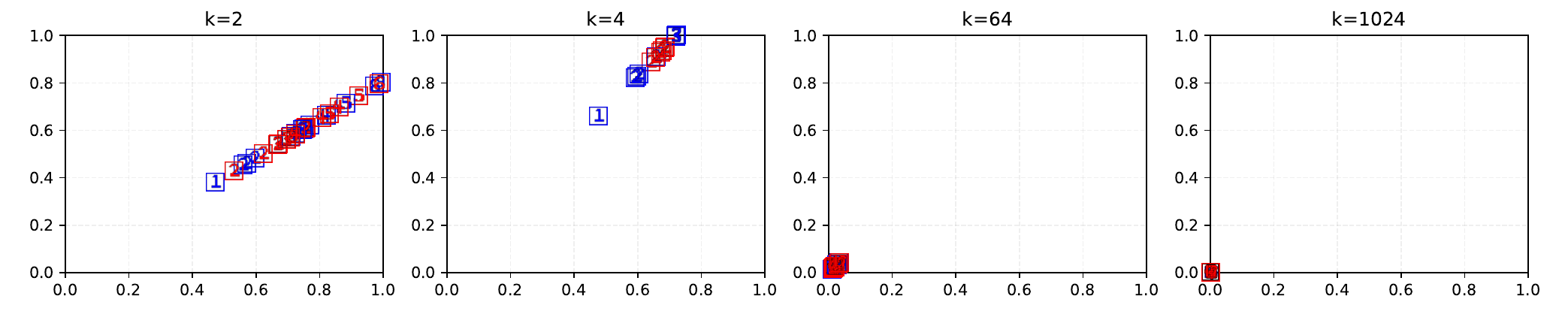}}
\caption{Embedding visualization on Zachary's karate club network. Colors denote class labels, and numbers (in squares) denote node degrees.}
\label{fig_karate_2d}
\end{figure*}

\begin{table*}[!ht]
\caption{Parts of the detailed node embedding results in Zachary's karate club network.
}
\centering
\scalebox{1.0}{
\begin{tabular}{cccc}
\toprule
No. (Deg.)    &SGC (k=64/1024)         &GCN (k=64) &GCN (k=1024)   \\
%\hline
\midrule
11 (1)   &$[.2904,\ .2344]$	          &$[.0111,\ .0137]$ &$[1.09e-20, 4.39e-21]$\\
$\cdots$ &$\cdots$ &$\cdots$ &$\cdots$\\
4 (3)                      &$[.4107,\ .3314]$	            &$[.0157,\ .0193]$  &$[1.53e-20,6.21e-21]$\\
10 (3)                    &$[.4107,\ .3314]$           &$[.0157,\ .0193]$  &$[1.53e-20,6.21e-21]$\\
$\cdots$ &$\cdots$ &$\cdots$ &$\cdots$\\
5 (4)                      &$[.4591,\ .3705]$             &$[.0175,\ .0216]$ &$[1.72e-20,6.94e-21]$\\
6 (4)                      &$[.4591,\ .3705]$            &$[.0175,\ .0216]$ &$[1.72e-20,6.94e-21]$\\
7 (4)                     &$[.4591,\ .3705]$            &$[.0175,\ .0216]$ &$[1.72e-20,6.94e-21]$\\
$\cdots$ &$\cdots$ &$\cdots$ &$\cdots$\\
0 (16)	                    &$[.8466,\ .6832]$	            &$[.0323,\ .0399]$ &$[3.16e-20,1.28e-20]$\\
33	(17)                    &$[.8712,\ .7031]$           &$[.0333,\ .0410]$ &$[3.26e-20,1.32e-20]$\\
\bottomrule
\end{tabular}}
\label{tab_karate_sgc_gcn}
\end{table*}

\subsection{Numerical Verification of Theorem~\ref{corol_proportion_sgc_gcn}}\label{app_over_smoothing_exps}
To verify our theoretical analysis, we conduct a numerical verification experiment on Zachary's karate club network.
This graph has two classes (groups) with 34 nodes connected by 154 (undirected and unweighted) edges.
As this graph has no node attributes, we randomly generate two dimensional features for each node.
Then, we test SGC and GCN by varying the number of layers in these methods.
Specifically, for GCN, we set the dimensions of all hidden layers to two, use ReLU activation, and adopt uniform weight initialization.
At last, in these two methods, we always use the outputs of the last layer as the final node embedding results.

Figure~\ref{fig_karate_2d} shows the visualization of the obtained node embeddings.
For a better verification, we also list parts of the corresponding numerical results in Table~\ref{tab_karate_sgc_gcn}.
Here, we combine the results of SGC with $k=64$ and $k=1024$ to one column, since their results are exactly the same.
We can clearly find that in both SGC and GCN, the nodes (in the same connected component) with the same degree tend to converge to the same embedding results.
We also note that the convergence in GCN is less obvious than that in SGC, which may be due to the existence of many nonlinearities and network weights in GCN.
In addition, we can see that the ratio of the embeddings of nodes $v_i$ and $v_j$ is always
$\frac{\sqrt{D_{ii}+1}}{\sqrt{D_{jj}+1}}$.
For example, in SGC and GCN, the ratios of node 11's embeddings and node 4's embeddings are both $\frac{\sqrt{3+1}}{\sqrt{1+1}} = \sqrt{2}$.
All these findings are consistent with our Theorem~\ref{corol_proportion_sgc_gcn}, successfully verifying our analysis.

\begin{table}[!t]
\renewcommand{\arraystretch}{0.99}
\caption{Classification results ($\%$) w.r.t. various layer/iteration numbers. \textbf{OOM}: Out of memory.}\label{tabl_diff_k}
    \begin{center}
    \begin{tabular}{ll|cccccc}
        \toprule
            \multirow{2}{*}{\textbf{Dataset}} & \multirow{2}{*}{\textbf{Method}} & \multicolumn{6}{c}{\textbf{Layers/Iterations}} \\
             &  & 2  & 4  & 8 & 16 & 32 & 64 \\
            \hline
            \hline
            \multirow{15}{*}{\textbf{Cora}}
                                        & GNN-LF/HF & 81.3 & 83.0  & 83.5  & 83.9  & 83.8 & 83.7  \\
                                        & GCN & 81.1  & 80.4  & 69.5  & 64.9  & 60.3  & 28.7  \\
                                        & GCN(Drop) & 82.8  & 82.0  & 75.8  & 75.7  & 62.5  & 49.5  \\
                                        & JKNet & - & 80.2  & 80.7  & 80.2  & 81.1 & 71.5  \\
                                        & JKNet(Drop) & - & 83.3  & 82.6  & 83.0  & 82.5 & 83.2  \\
                                        & Incep & - & 77.6  & 76.5  & 81.7  & 81.7 & 80.0  \\
                                        & Incep(Drop) & - & 82.9  & 82.5  & 83.1  & 83.1 & 83.5  \\
                                        & GCNII & 82.2  & 82.6  & 84.2  & 84.6  & 85.4  & 85.5  \\
                                        & GRAND &83.8 &84.5 &85.4 &84.0 &82.9 &79.6 \\
                                        & ACMP &82.0 &83.9 &84.0 &83.2 &83.1 &80.5 \\
                                        & \textbf{\myoptalg} (ours) & 71.9  &77.3  &83.8  & \textbf{87.0}  & 86.8  & 86.5  \\
                                        \cmidrule{2-8}
                                        & SGC &81.0 &80.7 &80.8 &79.4 &76.5 &69.7 \\
                                        & S$^2$GC &78.8 &81.4 &82.2 &82.5 &82.3 &81.2\\
                                        & \textbf{\mygspalg} (ours) &80.6 &81.6 &81.9 &81.3 &81.2 &81.1 \\
                                        & \textbf{GGCM} (ours) &78.2 &80.9 &\textbf{84.1} &83.9 &83.5 &83.1 \\ %nips 23
            \midrule
            \multirow{15}{*}{\textbf{Citeseer}}
                                        & GNN-LF/HF & 71.1 & 71.5  & 72.0  & 72.1  & 72.2 & 72.2  \\
                                        & GCN & 70.8  & 67.6  & 30.2  & 18.3  & 25.0  & 20.0  \\
                                        & GCN(Drop) & 72.3  & 70.6  & 61.4  & 57.2  & 41.6  & 34.4  \\
                                        & JKNet & - & 68.7  & 67.7  & 69.8  & 68.2 & 63.4  \\
                                        & JKNet(Drop) & - & 72.6  & 71.8  & 72.6 & 70.8 & 72.2  \\
                                        & Incep & - & 69.3  & 68.4  & 70.2  & 68.0 & 67.5  \\
                                        & Incep(Drop) & - & 72.7  & 71.4  & 72.5  & 72.6 & 71.0  \\
                                        & GCNII & 68.2  & 68.9  & 70.6  & 72.9  & 73.4  & 73.4  \\
                                        & GRAND &75.4 &74.5 &74.5 &73.8 &73.3 &71.9\\
                                        & ACMP & 73.5 &74.6 &74.2 &73.1 &72.8 &68.9 \\
                                        & \textbf{\myoptalg} (ours) & 70.7  & 74.1  & 76.7  & \textbf{77.5}  & 77.3  & 76.5  \\
                                    \cmidrule{2-8}
                                        & SGC & 71.9 &69.5 &69.9 &69.9 &69.4 &68.7 \\
                                        & S$^2$GC &71.4 &72.0 &72.4 &73.0 &72.9 &72.0\\
                                        & \textbf{\mygspalg} (ours) &71.9 &72.5 &73.5 &73.8 &72.9 &73.0 \\
                                        & \textbf{GGCM} (ours) &72.1 &73.0 &73.2 &\textbf{73.9} &\textbf{73.9} &73.5 \\
            \midrule
            \multirow{15}{*}{\textbf{Pubmed}}
                                        & GNN-LF/HF & 80.0 & 80.2  & 80.3  & 80.4  & 80.5 & 80.3  \\
                                        & GCN & 79.0  & 76.5  & 61.2  & 40.9  & 22.4  & 35.3  \\
                                        & GCN(Drop) & 79.6  & 79.4  & 78.1  & 78.5  & 77.0  & 61.5  \\
                                        & JKNet & - & 78.0  & 78.1  & 72.6  & 72.4 & 74.5  \\
                                        & JKNet(Drop) & - & 78.7  & 78.7  & 79.1  & 79.2 & 78.9 \\
                                        & Incep & - & 77.7  & 77.9  & 74.9  & \textbf{OOM} & \textbf{OOM}  \\
                                        & Incep(Drop) & - & 79.5  & 78.6  & 79.0  & \textbf{OOM} & \textbf{OOM}  \\
                                        & GCNII & 78.2  & 78.8  & 79.3  & 80.2  & 79.8  & 79.7  \\
                                        & GRAND &81.3 &82.7 &82.4 &82.2 &81.3 &80.5\\
                                        & ACMP &79.0 &79.7 &79.5 &79.2 &79.8 &78.0 \\
                                        & \textbf{\myoptalg} (ours) & 80.7 &81.6 &82.5 &\textbf{83.5} &82.5 &82.1  \\
                                    \cmidrule{2-8}
                                        & SGC &78.9 &77.5 &79.1 &76.9 &73.2 &70.8  \\
                                        & S$^2$GC &78.6 &78.6 &79.4 &80.0 &78.5 &76.8\\
                                        & \textbf{\mygspalg} (ours) &78.9 &79.1 &78.9 &79.3 &79.4 &79.7 \\
                                        & \textbf{GGCM} (ours) &78.8 &79.3 &79.9 &\textbf{81.5} &81.1 &80.3 \\
        \bottomrule
    \end{tabular}
    \end{center}
\end{table}

\subsection{Effect of Depths/Iterations in Various Methods}\label{app_sect_exp_depth}
Table~\ref{tabl_diff_k} shows the complete classification results w.r.t. various numbers of layers/iterations in some deep GCNs, ``no-learning'' methods and ours.
To fix the over-smoothing problem in those deep models, we adopt Dropedge~\citep{rong2019dropedge} and reuse the results (i.e., GCN(Drop), JKNet(Drop) and Incep(Drop)) reported in~\citep{rong2019dropedge} and ~\citep{chen2020simple}.

We observe that in most cases, the performance of our methods increases as the iteration goes on.
Specifically, our methods all tend to get the best performance at around 16-th or 32-th iteration, and could maintain similar performance as the iteration number increases to 64.
These results indicate that by introducing supervised knowledge or preserving graph structure in the convolution process on a graph, we can alleviate the over-smoothing problem and benefit from deep propagation.
In contrast, most baselines still heavily suffer from the over-smoothing problem.
Of note, once the model depth is fixed, all the compared deep GCNs (like Incep, GCNII and GRAND) need to re-train the entire neural networks and re-tune all hyper-parameters, which would cost lots of time, effort and resource.
In contrast and worth noticing, our three methods, which all run in an iterative way, do not have this problem.
As a whole, these experimental results demonstrate that all our methods, with significantly fewer parameters, enable deep propagation in a more effective way.

\begin{table*}[!t]
\scriptsize
\setlength{\tabcolsep}{0.6mm}
\renewcommand{\arraystretch}{1.3}
\caption{Running time (seconds) on Pubmed.}\label{tabl_diff_k_time}
    \begin{center}
    \begin{threeparttable}
    \begin{tabular}{cl|rrrrrr|r|rrrrrr|r}
        \toprule
             \multirow{3}{*}{\textbf{Type}} & \multirow{3}{*}{\textbf{Method}} & \multicolumn{7}{|c|}{\textbf{Running on CPU}} & \multicolumn{7}{|c}{\textbf{Running on  GPU}}\\
             \cline{3-16}
             & & \multicolumn{6}{|c|}{\textbf{Layers/Iterations}} & \multirow{2}{*}{\textbf{Total}\tnote{*}} & \multicolumn{6}{|c|}{\textbf{Layers/Iterations}} & \multirow{2}{*}{\textbf{Total}\tnote{*}}\\
                                     &  & 2  & 4  & 8 & 16 & 32 & 64 & & 2  & 4  & 8 & 16 & 32 & 64 & \\
            \midrule
               \multirow{3}{*}{\begin{turn}{90}{\scriptsize \textbf{supervised}}\end{turn}}
                                        %& GCN~\citep{kipf_2017_iclr} & 2.1215  & 4.4584  & 7.5565  & 64.9  & 60.3  & 35.4758  \\
                                        & GCNII & 98.02 &234.60 &545.44 &923.61 &1555.37 &4729.32 &8086.36 &4.60 &10.76 &29.64 &54.08 &200.20 &780.24 &1079.52 \\
                                        & GRAND &857.23 &952.29 &2809.84 &4361.91 &6225.39 &15832.91 &31039.57 &35.45 &70.99 &274.95 &2090.65 &2961.71 &6829.78 &12263.54 \\
                                        & \textbf{\myoptalg} &1.88 &3.69 &7.29 &14.63 &30.36 &61.88 &61.88 & \multicolumn{6}{|c|}{-} &- \\
                                        %\cline{2-8}
                                        \midrule
                \multirow{4}{*}{\begin{turn}{90}{\scriptsize \textbf{unsupervised}}\end{turn}}
                                        & SGC &0.22 &0.28 &0.44 &0.77 &1.46 &2.80 &5.78 &0.52 &0.52 &0.55 &0.61 &0.70 &0.82 &3.59  \\
                                        & S$^2$GC &0.25 &0.36 &0.56 &0.94 &2.16 &3.66 &6.88 &0.57 &0.60 &0.62 &0.66 &0.75 &0.92 &3.99 \\
                                        & \textbf{\mygspalg} &0.27 &0.50 &0.85 &1.60 &3.10 &6.02 &10.26 &0.58 &0.59 &0.62 &0.67 &0.80 &1.01 &4.08 \\
                                        & \textbf{GGCM} &0.37 &0.57 &1.06 &1.93 &3.77 &7.27 &12.45 &0.60 &0.63 &0.64 &0.69 &0.85 &1.09 &4.35 \\
        \bottomrule
    \end{tabular}
    \begin{tablenotes}
  \item[*] \footnotesize
      To count the total running time, for the compared supervised deep GCNs (GCNII and GRAND), we sum their running times at different layers, since they have to retrain the model totally once the layer number is fixed. As our method OGC could naturally conduct node classification at each iteration, we adopt its time cost when it has finished the 64-th iteration as the total time.
      Here, we do not report the running time of OGC on GPU, since this method is actually a shallow method.
      In all unsupervised methods, we will first get node embedding results and then train a logistic regression classifier (with a single layer neural network) for node classification.
      Therefore, we remove those repetitive costs in the convolution process, as they both can get embedding results at each iteration.
\end{tablenotes}
\end{threeparttable}
    \end{center}
\end{table*}

\subsection{Efficiency Testing}\label{app_exp_eff_test}
We conduct this test on a laptop with the following configuration: 8 GB RAM, Intel Core i5-8300H CPU, and Nvidia GeForce RTX2060 6 GB GPU.
All the codes are written in PyTorch, and all the default hyper-parameters are adopted.
For baselines, we adopt two recently proposed deep GCNs (GCNII and GRAND) and two typical no-learning GCNs (SGC and S$^2$GC).

For all methods, we always adopt their default hyper-parameters and settings.
Table~\ref{tabl_diff_k_time} reports the average running time on Pubmed over 100 runs on both CPU and GPU, w.r.t. different numbers of layers/iterations.
First of all, we can clearly see that these two supervised deep GCNs are very costly.
In contrast, when running on CPU, our supervised method OGC is at least $130$ times and $500$ times more efficient than GCNII and GRAND, respectively.
Moreover, even when switching to GPU, OGC is still much (around $17$ times and $198$ times) faster than GCNII and GRAND.
This is because: 1) OGC is a shallow method which only involves some sparse matrix multiplications; 2) As a supervised iterative method, OGC can conduct node classification at every iteration naturally.
The other observation is that our two unsupervised methods GGC and GGCM have the same efficiency level as SGC and S$^2$GC.
The extra time cost in our two methods is due to the IGC operation.
We also note that when switching to GPU, our two methods would have very similar efficiency performance as SGC and S$^2$GC.

\subsection{Learning with Both Train and Validation Sets}\label{app_sect_train_val}
\begin{table*}[!hbt]
  \centering
    \begin{tabular}{ll|cc|cc|cc}
    \toprule
    & & \multicolumn{2}{c|}{\textbf{Cora}} & \multicolumn{2}{c|}{\textbf{Citeseer}} & \multicolumn{2}{c}{\textbf{Pubmed}} \\
    & &Acc. & Chg. &Acc. & Chg. &Acc. & Chg. \\
    \midrule
    & LP  &  71.5$\pm$0.3 &\textcolor{green}{+3.5$\uparrow$}  & 48.9$\pm$0.4 &\textcolor{green}{+3.6$\uparrow$} & 65.8$\pm$0.3 &\textcolor{green}{+2.8$\uparrow$} \\
    & ManiReg  & 62.0$\pm$0.3 &\textcolor{green}{+2.5$\uparrow$} & 63.5$\pm$0.3 &\textcolor{green}{+3.4$\uparrow$}  & 73.2$\pm$0.3 &\textcolor{green}{+2.5$\uparrow$} \\
    & GCN  & 70.2$\pm$0.3 &\textcolor{red}{-11.3$\downarrow$} & 61.2$\pm$0.1 &\textcolor{red}{-2.8$\downarrow$} & 66.5$\pm$0.2 &\textcolor{red}{-12.5$\downarrow$}\\
    & APPNP & 68.7$\pm$0.2 &\textcolor{red}{-14.6$\downarrow$}  & 62.1$\pm$0.3 &\textcolor{red}{-9.7$\downarrow$} & 69.6.5$\pm$0.4 &\textcolor{red}{-10.5$\downarrow$} \\
    & Eigen-GCN  &70.5$\pm$ 0.5 &\textcolor{red}{-8.4$\downarrow$}  & 62.9$\pm$0.5 &\textcolor{red}{-3.6$\downarrow$} & 71.2$\pm$0.3 &\textcolor{red}{-7.4$\downarrow$} \\
    & GNN-LF/HF &69.8$\pm$ 0.5 &\textcolor{red}{-14.2$\downarrow$}  & 59.8$\pm$0.5 &\textcolor{red}{-12.5$\downarrow$} & 70.6$\pm$0.3 &\textcolor{red}{-9.9$\downarrow$}\\
    & C\&S & 83.6$\pm$0.5 &\textcolor{green}{+2.5$\uparrow$} & 74.5$\pm$0.3 & \textcolor{green}{+2.5$\uparrow$} & 81.1$\pm$0.4 & \textcolor{green}{+2.1$\uparrow$} \\
    & NDLS     &69.5$\pm$0.7 &\textcolor{red}{-15.1$\downarrow$} &53.2$\pm$0.4 &\textcolor{red}{-20.5$\downarrow$} &70.5$\pm$0.4 &\textcolor{red}{-10.9$\downarrow$} \\
    & ChebNetII &  66.5$\pm$0.3 &\textcolor{red}{-17.2$\downarrow$} &54.9$\pm$0.5 &\textcolor{red}{-17.9$\downarrow$} &59.6$\pm$0.2  &\textcolor{red}{-20.9$\downarrow$}\\
    & OAGS &  68.5$\pm$0.6 &\textcolor{red}{-15.4$\downarrow$} &58.4$\pm$0.7 &\textcolor{red}{-15.3$\downarrow$} &62.8$\pm$0.7 &\textcolor{red}{-19.1$\downarrow$} \\
    & JKNet  &  55.1 $\pm$ 0.6 &\textcolor{red}{-27.6$\downarrow$} &  59.4 $\pm$ 0.4 &\textcolor{red}{-13.6$\downarrow$} &  64.2 $\pm$ 0.6  &\textcolor{red}{-13.7$\downarrow$} \\
    & Incep & 57.5$\pm$0.2 &\textcolor{red}{-25.3$\downarrow$}  & 60.2$\pm$0.4 &\textcolor{red}{-12.1$\downarrow$} & 62.5$\pm$0.5 &\textcolor{red}{-17.0$\downarrow$}\\
    & GCNII & 58.8$\pm$0.5 &\textcolor{red}{-26.7$\downarrow$} & 57.6 $\pm$ 0.6 &\textcolor{red}{-15.8$\downarrow$} & 66.5 $\pm$ 0.4 &\textcolor{red}{-13.7$\downarrow$} \\
    & GRAND & 56.7$\pm$0.4 &\textcolor{red}{-28.7$\downarrow$} & 52.5$\pm$0.4 &\textcolor{red}{-22.9$\downarrow$} & 59.6$\pm$0.6 &\textcolor{red}{-23.1$\downarrow$} \\
    & ACMP & 60.3$\pm$0.5 &\textcolor{red}{-24.6$\downarrow$} & 58.4$\pm$0.3 &\textcolor{red}{-17.1$\downarrow$} & 64.5$\pm$0.5 &\textcolor{red}{-14.9$\downarrow$} \\
    & \textbf{\myoptalg} (ours) & \textbf{86.9 $\pm$ 0.1} & \textcolor{green}{+3.2$\uparrow$} & \textbf{77.5 $\pm$ 0.2} & \textcolor{green}{+3.5$\uparrow$} & \textbf{83.4 $\pm$ 0.4} & \textcolor{green}{+2.2$\uparrow$} \\
    \bottomrule
    \end{tabular}
    \caption{Summary of classification accuracy ($\%$) of all supervised methods, with both train and validation label sets are used for model learning. The ``Chg.'' column shows the accuracy change, compared to when only the train part is used for model learning.}
    \label{tab_node_classification_train_val}
\end{table*}

In this subsection, for a more fair comparison, we use both training and validation sets as the supervised knowledge for all supervised methods.
Specifically, for all shallow baselines and our method OGC, their hyper-parameters are chosen to consistently improve the classification accuracy on the whole label set.
For GNN baselines, besides the above-mentioned hyper-parameter tuning strategy, we also try their default settings, and we finally report the best results of these two hyper-parameter tuning strategies.
At last, for all baselines, we will stop their learning process when the predicted labels converge.

Table~\ref{tab_node_classification_train_val} shows the node classification results.
We can clearly see that the performance of all shallow methods (including our method OGC) gains significantly.
This indicates that shallow methods do not heavily rely on the validation set for hyper-parameter tuning.
In contrast, the performance of all GNN baselines (especially those very deep ones) declines dramatically, indicating that they all badly suffer from the overfitting problem.
This is consistent with previous deep learning studies which have shown that general GNNs heavily require many additional labels for validation.

\begin{figure*}[!t]
\color[rgb]{0.00,0.00,0.55}
\centering
\subfigure{
    \includegraphics[width=0.315\textwidth]{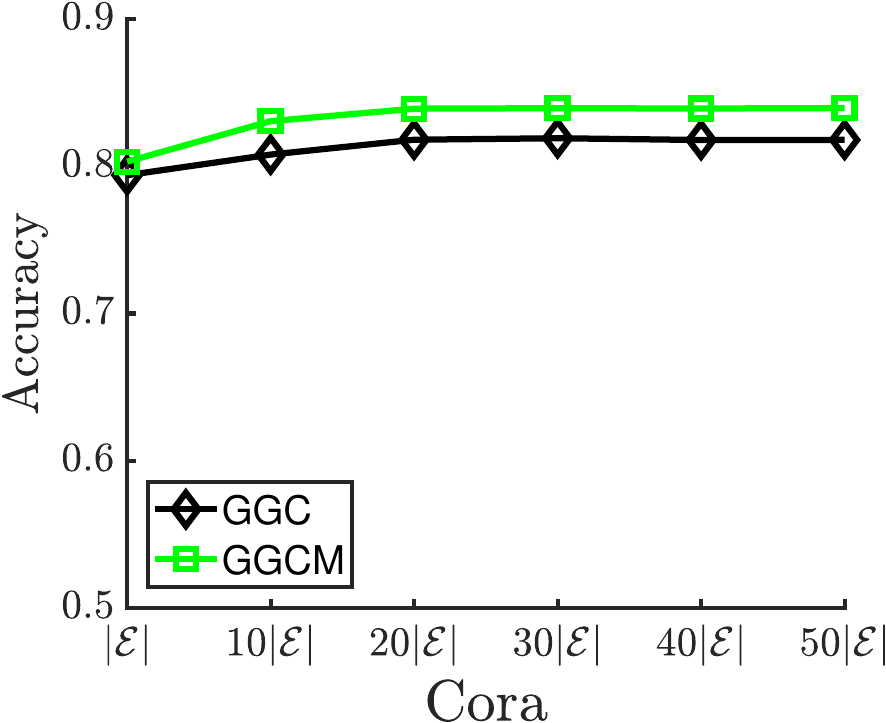}
}
\subfigure{
    \includegraphics[width=0.315\textwidth]{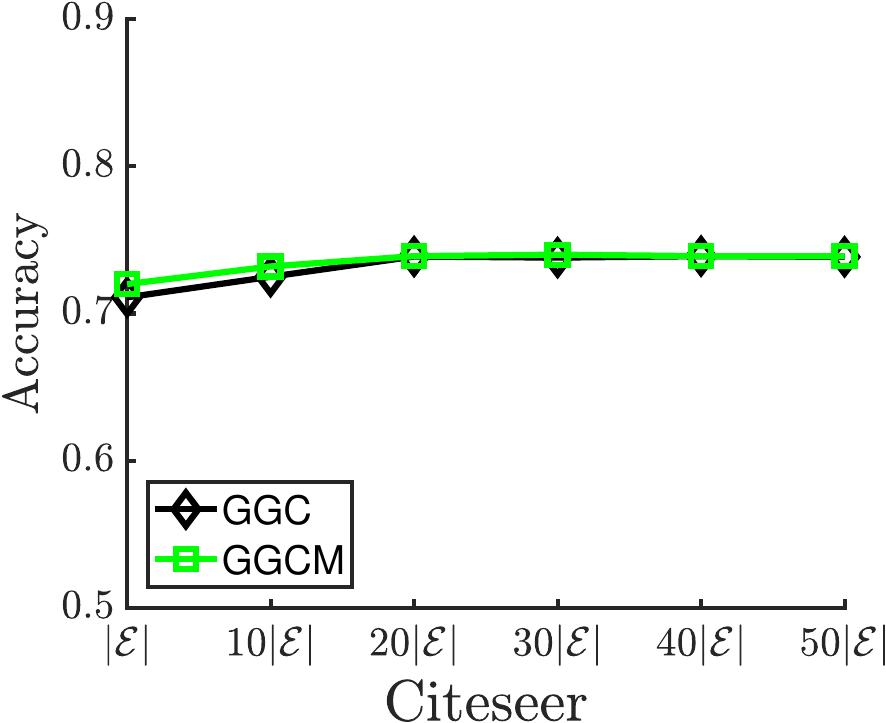}
}
\subfigure{
    \includegraphics[width=0.315\textwidth]{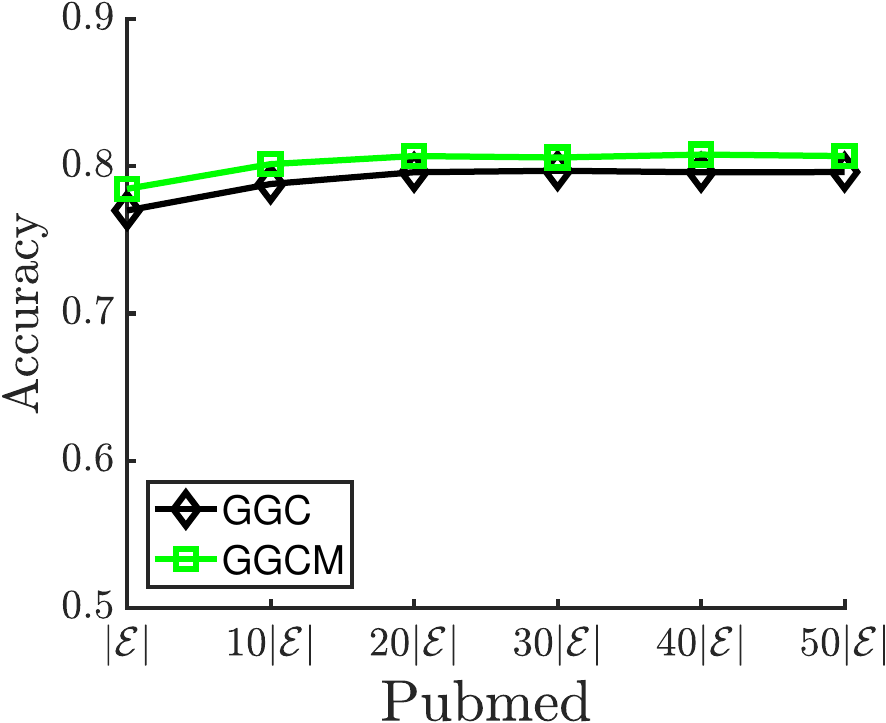}
}
\caption{Effect of negative graph edge number in GGC and GGCM.}
\label{fig_igc_neg_rate}
\color{black}
\end{figure*}

\subsection{Various Settings of the Proposed Methods}\label{app_sect_various_set_ours}

\begin{figure*}[!t]
\centering
\subfigure[\myoptalg]{
    \label{sub_fig_optsgc_abs}
    \includegraphics[width=0.31\textwidth]{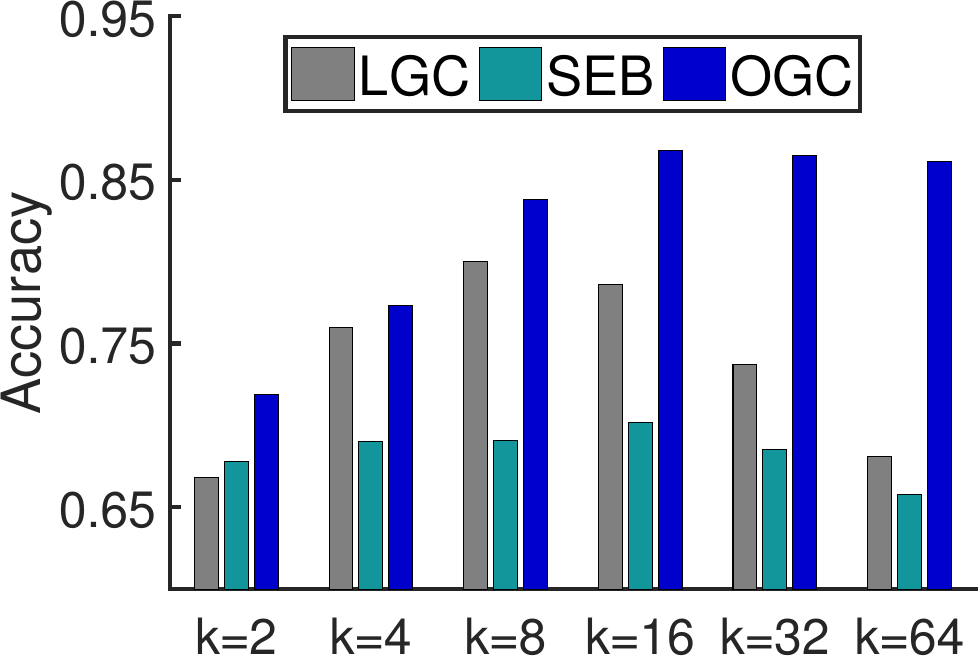}
}
\subfigure[\mygspalg]{
    \label{sub_fig_gspsgc_abs}
    \includegraphics[width=0.31\textwidth]{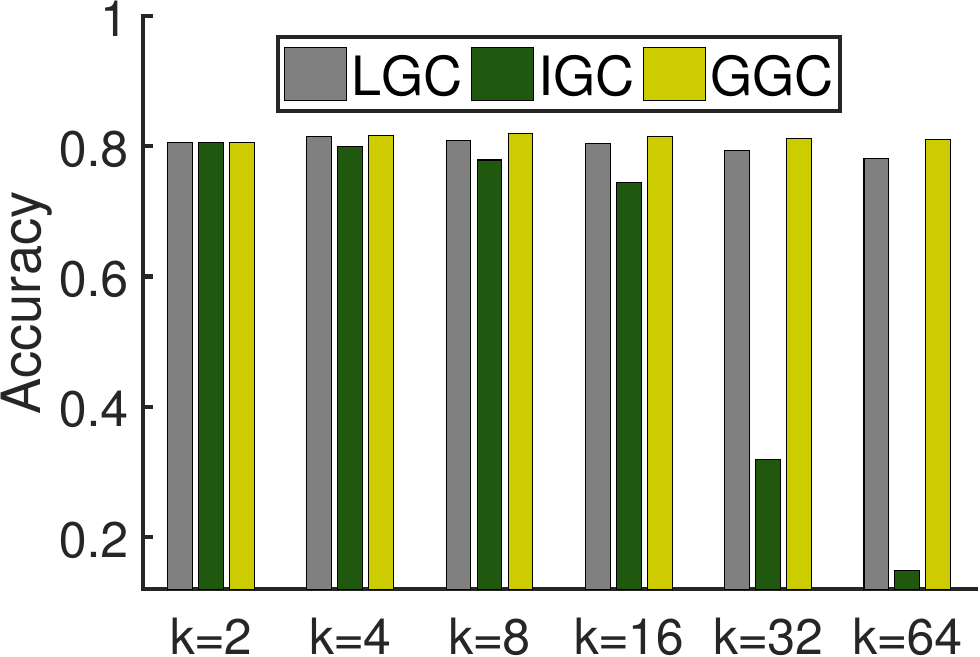}
}
\subfigure[GGCM]{
    \label{sub_fig_ggcm_abs}
    \includegraphics[width=0.31\textwidth]{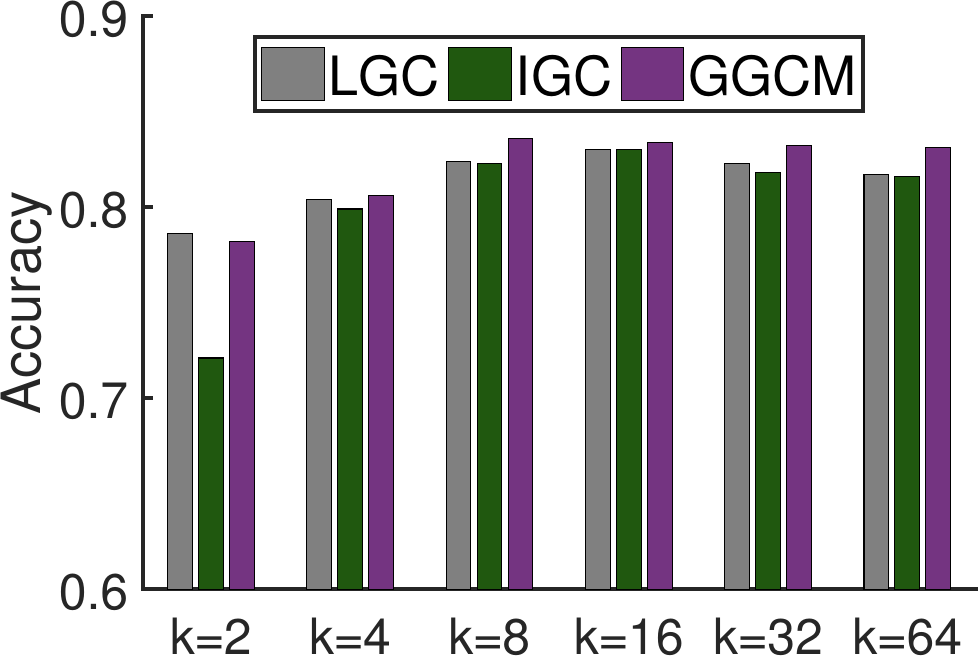}
}
\caption{Ablation study on Cora.}
\label{fig_abs}
\end{figure*}

\subsubsection{Ablation Study}\label{app_subsect_abs}
To completely understand the impact of each model part, we conduct an ablation study by disabling each component in our methods.
This experiment is set as follows.
In \myoptalg, we test its two subparts: lazy graph convolution (short for \emph{LGC}) and SEB.
In \mygspalg\ and GGCM, we also test their two subparts: LGC and IGC.
To save space, we only test the first dataset Cora, as the similar results are observed on the other datasets.

As shown in Fig.~\ref{fig_abs}, in all our three methods, separating the training process of their two subparts will lead to worse performance.
Moreover, as shown in Fig.~\ref{sub_fig_gspsgc_abs}, IGC declines most rapidly, as the iteration number increases.
We can intuitively understand this as: for node embedding, merely ensuring the dissimilarity between unlinked nodes is pointless, as those unlinked node pairs are enormous.
On the other hand, comparing with the results in Table~\ref{tabl_diff_k}, we also find that the performance of LGC declines much slower than that of SGC.
This is consistent with our analysis in Sect.~\ref{subsect_sgc_analysis}, that is, lazy graph convolution can be used to alleviate the over-smoothing problem.
We also note that the performance of LGC in these three sub-figures looks quite different.
This is because our three methods use this operator with different inputs and settings (like $\beta$ values and decay rates).

\subsubsection{Sensitivity on Number of Edges in IGC}
In the proposed methods GGC and GGCM, they both randomly generate a negative graph for the IGC operator at each iteration.
We vary the edge number in this negative graph among $\{|\mathcal{E}|, 10|\mathcal{E}|, 20|\mathcal{E}|, 30|\mathcal{E}|, 40|\mathcal{E}|, 50|\mathcal{E}|\}$, where $|\mathcal{E}|$ is the edge number of the original graph.
For simplicity, we always fix the iteration number to 16.
As shown in Fig.~\ref{fig_igc_neg_rate}, at the beginning, when the edge number increases, the performance of these two methods will also increase. However, when the edge number larger than $20|\mathcal{E}|$, the performance of these methods will be very stable.
This is consistent with the common sense of negative sampling. For example, in the famous work word2vec~\cite{mikolov2013distributed}, for each positive pair, it requires 15 negative pairs to get preferred performance.
These results indicate that increasing the negative graph edge number within a certain limit could improve the performance of both GGC and GGCM.

\subsubsection{Effect of LIM Trick in OGC}
In our supervised methods OGC, we adopt the LIM trick.
The primary objective behind this adoption is to mitigate the risk of overfitting within the learned node embeddings (i.e., updating $U$ in Eq.~\ref{eq_update_X_Qsup_Q_reg_lazy}).
To comprehensively assess the impact of this trick, we conduct an ablation study in this subsection.
Table~\ref{table_lim_trick} shows the classification performance, and the training MSE loss (including both the train and validation sets).
We can clearly see that LIM trick largely improve the performance, simultaneously obtaining higher training loss values.
This observation suggests that the LIM trick holds promise in addressing the overfitting challenge within graph embedding tasks.

\begin{table}
\centering
\label{table_lim_trick}
\begin{tabular}{l|cc|cc|cc}
\toprule
 & \multicolumn{2}{c|}{\textbf{Cora}} & \multicolumn{2}{c|}{\textbf{Citeseer}} & \multicolumn{2}{c}{\textbf{Pubmed}} \\
 &w/ & w/o &w/ & w/o &w/ & w/o \\
\midrule
Accuracy (\%)            & \textbf{86.9} &84.9 & \textbf{77.5} &75.8 &\textbf{83.4} & 82.6 \\
\midrule
Training MSE Loss          & 2.3e-2 & 3.7e-3   & 3.5e-2 & 2.7e-3 & 7.2e-2 & 5.3e-3 \\
\bottomrule
\end{tabular}
\caption{Effect of LIM trick in OGC on node classification.
The ``w/'' and ``w/o'' columns show the performance with the LIM trick enabled and disabled, respectively.
}
\end{table}

\section{Derivation Details}\label{app_sect_derivation}
\subsection{Derivative of Graph Convolution in SGC}\label{app_subsect_dev_sgc}
We can minimize the loss $\bar{\mathcal{Q}}_{smo}$ in Eq.~\ref{eq_loss_graph_ssl_sgc} by iteratively updating $U$ as:
\begin{equation}\label{eq_gcn_dev_eta_1_2}
    \begin{aligned}
    U^{(k+1)} &= U^{(k)} - \eta_{smo} \frac{\partial \bar{\mathcal{Q}}_{smo}}{\partial U^{(k)}} \\
              &= U^{(k)} - 2\eta_{smo} \tilde{D}^{-\frac{1}{2}}\tilde{L}\tilde{D}^{-\frac{1}{2}}U^{(k)},
    \end{aligned}
\end{equation}
where $\eta_{smo}$ is the learning rate in this part. If we set $\eta_{smo}=\frac{1}{2}$, we can get:
\begin{equation}\label{eq_sgc_gc_dev}
    \begin{aligned}
    U^{(k+1)} &= U^{(k)} - \tilde{D}^{-\frac{1}{2}}\tilde{L}\tilde{D}^{-\frac{1}{2}}U^{(k)} \\
                   %= (\tilde{D}^{-\frac{1}{2}}\tilde{A}\tilde{D}^{-\frac{1}{2}})U^{(k)} \\
                  &= (I_{n}- \tilde{D}^{-\frac{1}{2}}\tilde{L}\tilde{D}^{-\frac{1}{2}})U^{(k)} \\
                  &= (\tilde{D}^{-\frac{1}{2}}\tilde{A}\tilde{D}^{-\frac{1}{2}})U^{(k)}.
    \end{aligned}
\end{equation}
Eq.~\ref{eq_sgc_gc_dev} is exactly the graph convolution (i.e., Eq.~\ref{eq_gc}) used in SGC.
In addition, SGC initializes the learned node embedding matrix with $X$, which satisfies the constraint in Eq.~\ref{eq_loss_graph_ssl_sgc}.

\subsection{Derivative of Lazy Random Walk in SGC}\label{app_subsect_dev_lrw}

We can rewrite Eq.~\ref{eq_gcn_dev_eta_1_2} as follows:
\begin{equation}
    \begin{aligned}
    U^{(k+1)} &= U^{(k)} - 2\eta_{smo}\tilde{D}^{-\frac{1}{2}}\tilde{L}\tilde{D}^{-\frac{1}{2}}U^{(k)} \\
                  &= (I_{n}- 2\eta_{smo}\tilde{D}^{-\frac{1}{2}}\tilde{L}\tilde{D}^{-\frac{1}{2}})U^{(k)} \\
                  &=[ \tilde{D}^{-\frac{1}{2}} (\tilde{D}- 2\eta_{smo}\tilde{L})\tilde{D}^{-\frac{1}{2}}] U^{(k)} \\
                  &=[ \tilde{D}^{-\frac{1}{2}} (\tilde{D}- 2\eta_{smo}(\tilde{D}-\tilde{A}))\tilde{D}^{-\frac{1}{2}}] U^{(k)} \\
                  &= [2\eta_{smo}\tilde{D}^{-\frac{1}{2}}\tilde{A}\tilde{D}^{-\frac{1}{2}} + (1-2\eta_{smo})I_n]U^{(k)}.
    \end{aligned}
\end{equation}
Let $\beta=2\eta_{smo}$ and ensure $\beta\in(0,1)$. The above equation becomes:
\begin{equation}\label{eq_gcn_lazy}
U^{(k+1)} = [\beta \tilde{D}^{-\frac{1}{2}}\tilde{A}\tilde{D}^{-\frac{1}{2}} + (1-\beta)I_n]U^{(k)}.
\end{equation}

The expression in the square brackets in Eq.~\ref{eq_gcn_lazy} is exactly the definition of lazy random walk, and parameter $\beta$ can be seen as the moving probability that a node moves to its neighbors in every period.

\subsection{Derivative of Cross-Entropy Loss}\label{app_subsect_dev_cel}
To simplify writing and differentiation, we let $Z=UW$ and $P=\mathrm{softmax}(UW)$.
The cross-entropy loss of a single sample $j$ is: $\mathcal{L}_{j} = - \sum_{k=1}^c Y_{jk} log(P_{jk})$.

First of all, we introduce the derivative of softmax operator:\footnote{https://deepnotes.io/softmax-crossentropy}
\begin{equation}
%\small
\begin{aligned}
\frac{\partial P_{jk}}{\partial Z_{ji}} &= \frac{\partial  \frac{e^{Z_{jk}}}{\sum_{i^*=1}^c e^{Z_{ji^*}}}}{\partial Z_{ji}} = P_{jk}(\delta_{ki}-P_{ji}),
\end{aligned}
\end{equation}
where $\delta_{ki}$ is the Kronecker delta: $$\delta_{ki} = \begin{cases} 1, & \mathrm{if}\ k=i; \\ 0, & \mathrm{if}\ k\neq i. \end{cases}$$

Then, we can calculate the derivative of $\mathcal{L}_j$ w.r.t. $Z_{ji}$ as follows:
\begin{equation}
\begin{aligned}
\frac{\partial \mathcal{L}_{j}}{\partial Z_{ji}}
      &= - \sum_{k=1}^{c} Y_{jk} \frac{\partial log(P_{jk})}{\partial P_{jk}} \times \frac{\partial P_{jk}}{ \partial Z_{ji}} = P_{ji} - Y_{ji}.
\end{aligned}
\end{equation}

After that, the derivative of $\mathcal{L}_j$ w.r.t. $U_{ji}$ can be obtained as follows:
\begin{equation}
\small
\label{eq_dev_Li_U}
\begin{aligned}
\frac{\partial \mathcal{L}_{j}}{\partial U_{ji}}
   &= \sum_{i^*=1}^c\frac{\partial \mathcal{L}_{j}}{\partial Z_{ji^*}} \times \frac{\partial Z_{ji^*}}{\partial U_{ji}} = \sum_{i^*=1}^c(P_{ji^*} - Y_{ji^*})W_{ii^*}.
\end{aligned}
\end{equation}

We use $\mathcal{L}_{all}$ to denote the overall loss, i.e., $\mathcal{L}_{all}=\sum_{j=1}^{n}\mathcal{L}_{j}$.
Combining the derivatives of all the samples together, we can finally get:
\begin{equation}
\small
\begin{aligned}
\frac{\partial \mathcal{L}_{all}}{\partial U}
   &=S(P - Y)W^{T},
\end{aligned}
\end{equation}
where $S$ is a diagonal matrix with $S_{jj}=1$ if the simple $j$ is labeled, and $S_{jj}=0$ otherwise.

\subsection{Derivative of Squared Loss}
The squared loss of a single sample $j$ is: $\mathcal{L}_{j}=\frac{1}{2}\sum_{k=1}^{c}(Y_{jk}-Z_{jk})^2$.
Then, we can get the derivative of $\mathcal{L}_j$ w.r.t. $U_{ji}$ as follows:
\begin{equation}
\small
    \begin{aligned}
    \frac{\partial \mathcal{L}_{j}}{\partial U_{ji}}
        &= \sum_{i^*=1}^c\frac{\partial \mathcal{L}_{j}}{\partial Z_{ji^*}} \times \frac{\partial Z_{ji^*}}{\partial U_{ji}} = -\sum_{i^*=1}^c(Y_{ji^*}-Z_{ji^*})W_{ii^*}.
    \end{aligned}
\end{equation}

Considering all the samples together, we can finally get:
\begin{equation}
\small
    \begin{aligned}
\frac{\partial \mathcal{L}_{all}}{\partial U}
   &=S(Z-Y)W^T,
    \end{aligned}
\end{equation}
where $S$ is a diagonal matrix with $S_{jj}=1$ if the simple $j$ is labeled, and $S_{jj}=0$ otherwise.
\section{More Comparison Discussion and Related Work}\label{app_sect_dis_related}

\subsection{Comparison with Existing Work}
Our supervised method OGC enjoys high accuracy and low training complexity.
High accuracy is due to the ability of utilizing validation labels for model training compared with typical GCN-type methods like GCN~\citep{kipf_2017_iclr}, GAT~\citep{velickovic_2018_iclr} and GCNII~\citep{chen2020simple}.
Low training complexity is due to the easily parallelizable embedding updating mechanism compared with those end-to-end training GCN-type methods (like N-GCN~\citep{abu2020n} and L$^{2}$-GCN~\citep{you2020l2}) that also use label for joint learning.

Our unsupervised methods GGC and GGCM enjoy high graph structure preserving ability, by introducing the IGC operator at each iteration.
This makes our methods especially suitable for those graph structure-aware tasks, compared with typical shallow GCN-type methods, like SGC~\citep{wu2019simplifying} and S$^2$GC~\citep{zhusimple2021}.
Especially, the introduced IGC operator (a high-pass filter) can also be used in the heterophilic graphs, by directly feeding the graph structure into this operator.
In addition, compared to some shallow graph embedding methods (like LINE~\citep{tang2015line} and DeepWalk~\citep{perozzi_2014_kdd}) whose objective functions are designed to preserve the graph structure, our two methods can better utilize node attributes.

\subsection{\mbox{Graph-based Semi-Supervised Learning (GSSL)}}
Generally, GSSL methods can be divided into two categories~\citep{song2022graph}.
The first are graph regularization based methods, in which a Laplacian regularizer is used to ensure the cluster assumption.
For example, \citet{zhou2004learning} uses a symmetric normalized Laplacian regularizer; \citet{zhu_2003_icml} adopts a Gaussian-random-field based Laplacian regularizer.
The other are graph embedding based methods which aim to learn effective node embeddings (i.e., features) for node classification~\citep{cai2018comprehensive}.
Prior works, like Singular Value Decomposition~\citep{golub1971singular} and Nonnegative Matrix Factorization~\citep{pauca2006nonnegative}, mainly rely on graph factorization, badly suffering from the scalability problem.
Inspired by the success of word embedding~\citep{mikolov2013efficient}, some random walk based methods, like DeepWalk~\citep{perozzi_2014_kdd} and node2vec~\citep{grover2016node2vec}, are proposed to efficiently capture the connectivity patterns in a graph.
Following this, some attributed graph embedding methods~\citep{yang2015network} further consider the node attributes; some supervised methods~\citep{yang2016revisiting} further utilize label information.

\subsection{Graph Neural Networks (GNNs)}
With the power of neural network architectures, graph neural networks (GNNs)~\citep{wu2020comprehensive} are recently becoming the primary techniques for graph-structured data learning.
Generally, GNNs can also be considered as graph embedding based methods, as the outputs of any layer can be seen as the node embedding results.
However, unlike the previous methods, GNNs learn embeddings by aggregating feature information via neural networks at each layer.
Early works~\citep{gori2005new, scarselli2008graph} introduce recursive neural networks for feature aggregation.
However, these models suffer from scalability issues, as they requires the repeated application of contraction maps until node representations converge.
Inspired by the great success of CNNs on grid-structured data, ~\citep{defferrard_2016_nips} first formally defines the graph convolution operation by aggregating feature in the Fourier domain.
After that, GCN~\citep{kipf_2017_iclr} further simplify the graph convolution operation by using localized first-order approximations of the graph Laplacian matrix.
Since then, a large number of GCN variants have been proposed, like Graph Attention Networks (GAT) GAT~\citep{velickovic_2018_iclr}, GraphSAGE~\citep{hamilton2017inductive},  ChebNetII~\citep{he2022convolutional} and OAGS~\citep{song2022towards}, which have shown state-of-the-art performance on various graph-based tasks.
Our methods also follow this line.
Additionally, by alleviating the potential suboptimality of GCNs for joint graph structure and supervised learning, our methods affirm the existence of simple but powerful GNN models.
\end{appendices}
\end{document}